\documentclass[
    letterpaper,
    11pt,
    sort&compress,
]{article}

\usepackage[colorlinks=true,linkcolor=blue,citecolor = blue]{hyperref}
\usepackage{algorithm,algorithmic}

\usepackage{macros}
\usepackage{bbm}
\usepackage[numbers]{natbib}
\usepackage[margin=1in]{geometry}
\usepackage{graphicx}
\usepackage{empheq}
\usepackage[T1]{fontenc}
\usepackage{enumitem}
\usepackage{svg}
\usepackage{bm}
\usepackage{easybmat}

\title{Near-Optimal Pure Exploration in Matrix Games:\\ A Generalization of Stochastic Bandits \& Dueling Bandits}

\author{Arnab Maiti\thanks{Equal contribution.} \and Ross Boczar\footnotemark[1]\and Kevin Jamieson \and Lillian J. Ratliff
\\\and University of Washington\thanks{
\texttt{\{arnabm2,rjboczar,ratliffl\}@uw.edu}, \texttt{jamieson@cs.washington.edu}}
}
\date{}
\sloppy
\begin{document}

\maketitle
\begin{abstract}
We study the sample complexity of identifying the pure strategy Nash equilibrium  (PSNE) in a two-player zero-sum matrix game with noise.
Formally, we are given a stochastic model where any learner can sample an entry $(i,j)$ of the input matrix $A\in[-1,1]^{n\times m}$ and observe $A_{i,j}+\eta$ where $\eta$ is a zero-mean 1-sub-Gaussian noise. The aim of the learner is to identify the PSNE of $A$, whenever it exists, with high probability while taking as few samples as possible. \citet{zhou2017identify} presents an instance-dependent sample complexity lower bound that depends only on the entries in the row and column in which the PSNE lies.  We design a near-optimal algorithm whose sample complexity matches the lower bound, up to log factors. The problem of identifying the PSNE also generalizes the problem of pure exploration in stochastic multi-armed bandits and dueling bandits, and our result matches the optimal bounds, up to log factors, in both the settings.
\end{abstract}
\section{INTRODUCTION}
Pure exploration is a well-studied topic in stochastic multi-armed bandits. Given a set of arms, the aim is to identify the arm with the highest mean with high probability while sequentially sampling the arms. A practically motivated version of this problem is \emph{dueling bandits} where the learner pulls two arms and observes the winner among them.
The objective is to identify the Condorcet winner: the arm that has probability greater than 1/2 of winning against any other arm. 
The dueling bandit problem has also been studied from the perspective of pure exploration. 
A generalization of dueling bandits are matrix games which model scenarios where multiple strategic agents are involved and are competing against each other in a stochastic environment. 

Consider a hypothetical situation in which firms $F_1$ and $F_2$ both offer the same product. Typically, when both firms set an identical price, the customer base is evenly distributed between them. However, if one firm sets its price higher than the other, it is likely to experience a reduced share of the customer base. This presents a strategic challenge for the firm: should it opt for a lower price $p_1$ or a higher price $p_2$? 
If the firms are non-cooperative, the optimal thing for each firm to do is to set the lower price $p_1$ otherwise they will most probably lose customers. 
Such a strategic scenario in a stochastic environment can be formally captured by a two-player zero-sum game defined by a matrix $A$ as follows. Consider $A=[0,0.25; -0.25,0]$ where $A_{i,j}$ denotes the expected increase in the fraction of customers visiting firm $F_1$ if first decides to charge $p_i$ and $F_2$ decides to charge $p_j$. Both firms setting a low price $p_1$ corresponds to the entry $(1,1)$ and such an entry is referred to as pure strategy Nash equilibrium (PSNE) in game theory. 

Formally, an entry is a PSNE of $A$ if it is the highest entry in its column and lowest entry in its row. Although, a PSNE need not exist in every matrix $A$, there are some conditions under which they exist (see \citet{shapley1964some,radzik1991saddle,duersch2012pure}). For instance, \citet{duersch2012pure} showed that a symmetric two-player zero-sum game has a pure strategy equilibrium if and only if it is not a generalized rock-paper-scissors matrix. In this paper we generalize both stochastic multi-armed bandits and dueling bandits by studying the problem of identifying a PSNE, whenever it exists, in matrix games with noise.

\subsection{Problem Setting}
We consider two problems settings. 

\textbf{Matrix Games.} In this problem setting, there is an arbitrary input matrix $A\in[-1,1]^{n\times m}$ which is unknown to the learner. The learner can sample an entry $(i,j)$ of $A$ and observe the random variable $X_{i,j}=A_{i,j}+\eta$ where $\eta$ is a zero-mean 1-sub-Gaussian noise. The aim is to design a $\delta$-PAC learner, that is, a learner which has the following property: if the learner stops after a finite time $\tau$ and returns an entry $(\widehat i,\widehat j)$ then  $(\widehat i,\widehat j)$ is a PSNE of $A$ with probability at least $1-\delta$. 
The objective is to minimize the number of samples drawn from $A$.

\textbf{Dueling Bandits.} In dueling bandits setting, there are $K$ arms and a matrix $\mathbf{P}\in [0,1]^{K\times K}$. The matrix $\mathbf{P}$ has the property that for all $i,j$ we have $\mathbf{P}_{i,j}+\mathbf{P}_{j,i}=1$. In each round $t$, a pair of arms $i,j$ are pulled (also known as a duel) and a winner $X_t$ is declared where $\mathbb{P}(X_t=i)=1-\mathbb{P}(X_t=j)=\mathbf{P}_{i,j}$. An arm $\istar$ is called a Condorcet winner if for all $j\in [K]\setminus \{\istar\}$ we have $\mathbf{P}_{\istar,j}>1/2$. In pure exploration, one aims to design a learner that identifies the Condorcet winner, whenever it exists, with probability at least $1-\delta$ by performing as few duels as possible.
Note that dueling bandits is a special case of a matrix game with $A = \mathbf{P}$. Here the PSNE of $A$ is equal to $(i_*,i_*)$ where $i_*$ is the Condorcet winner.

\subsection{Contributions}
\citet{zhou2017identify} initiated the study of identifying the PSNE where they showed that any $\delta$-PAC learner requires $\Omega(\mathbf{H}_1\log(1/\delta))$ samples to find the PSNE $(\istar,\jstar)$, whenever it exists, where \[\mathbf{H}_1=\sum_{i\neq \istar}\frac{1}{(A_{\istar,\jstar}-A_{i,\jstar})^2}+\sum_{j\neq \jstar}\frac{1}{(A_{\istar,j}-A_{\istar,\jstar})^2}.\] In this paper, we design a $\delta$-PAC learner (\Cref{alg-meta}) that achieves the sample complexity of $\mathbf{H}_1\log(1/\delta)$, up to $\log$ factors. 

Recall that dueling bandits is a special case  of matrix games. 
\citet{haddenhorst2021testification} showed that any Condorcet $\delta$-PAC learner requires $\sum_{i\neq \istar}\Delta_{i,\istar}^{-2} \log(1/\delta)$ where $\Delta_{i,\istar} = \mathbf{P}_{\istar,i}-1/2$. 
In the notation of matrix games, this sample complexity is equal to $\tfrac{1}{2}\mathbf{H}_1 \log(1/\delta)$.
As our algorithm achieves $\mathbf{H}_1\log(1/\delta)$, up to $\log$ factors, it is a near-optimal Condorcet learner. 
To the best of our knowledge, this is the first algorithm to achieve this for Condorcet winner identification, without making additional strong assumptions (see the discussion on Dueling Bandits in~\Cref{sec:related}).  

Beyond the theoretical contributions, we also benchmark our algorithm against strong baselines in  \Cref{sec:experiments} and demonstrate that our algorithm is also superior empirically. 

\subsection{Related Work}
\label{sec:related}
\textbf{Matrix games.}
As just described \citet{zhou2017identify} showed that identifying the PSNE of a zero-sum matrix game requires a sample complexity of $\Omega(\mathbf{H}_1\cdot\log(1/\delta))$. They also provided an algorithm based on LUCB that achieves an upper bound of $O(\mathbf{H}_1\log(\mathbf{H}_1/\delta)+\frac{nm}{\tilde\Delta})$ where $\tilde\Delta$ is a matrix-dependent parameter. 
This upper bound is sub-optimal, and can be far worse than the lower bound. 
For instance, when $\delta$ and the gaps $|A_{i,j}-A_{\istar,\jstar}|$ are constants, the upper bound scales as $O(nm)$ where as the lower bound is $\Omega(n+m)$. Recently, \citet{maiti2023instance} studied the problem of finding $\varepsilon$-Nash Equilibrium in $n\times 2$ games and provided near-optimal instance-dependent bounds. The instance-dependent parameters include the gaps between the entries of the matrix, and difference between the value of the game and reward received from playing a sub-optimal row. 
Later, \citet{maiti2023logarithmic} extended the techniques of \citet{maiti2023instance} to identify the support of the Nash equilibrium in arbitrary $n\times m$ games. However, the bounds in their paper are sub-optimal. 

An important class of learners for matrix games utilizes no-regret algorithms for adversarial bandits (cf. \citet{freund1999adaptive}). 
Specifically, initialize two independent copies of \expix/ where the first selects rows to maximize reward, and the other selects columns to minimize loss, where both are fed $A_{i,j} + \eta$. 
Due to the fact that both algorithms enjoy an external regret bound of $\sqrt{K T}$ for a game played for $T$ time steps with $K$ arms, it can be shown that with constant probability, the average of the plays converges to an $\epsilon$-approximate Nash equilibrium as soon as $T$ exceeds $\frac{n+m}{\epsilon^2}$, ignoring log factors.
Consequently, this guarantees that the PSNE can be identified with constant probability once $T$ exceeds $\frac{n+m}{\Delta_{\min}^2}$ where $$\Delta_{\min} = \min\{ \min_{i \neq i_*} A_{i_*,j_*}-A_{i,j_*}, \min_{j \neq j_*} A_{i_*,j}-A_{i_*,j_*} \}.$$
The Tsalis-Inf algorithm of \citet{zimmert2021tsallis} is a no-regret algorithm for adversarial bandits that also enjoys some instance-dependent guarantees under certain favorable conditions. 
While we are not aware of any analysis of this algorithm in the matrix game setting, our experiments will show that while it is a stronger baseline than \expix/, its empirical performance is far inferior to our algorithm.

\textbf{Stochastic Multi-Armed Bandits.} Best arm identification in the fixed confidence setting is a well-studied topic (see \Cref{appendix:MAB} for the problem formulation). One of the earlier works in \citet{even2002pac} introduces the Successive Elimination algorithm that achieved an upper bound of $\sum_{i\neq \istar}\Delta_i^{-2}\log(n\Delta_i^{-2}/\delta)$ where $\istar$ is the best arm and $\Delta_i=\mu_{\istar}-\mu_{i}$. 
A lower bound of $\sum_{i\neq \istar}\Delta_i^{-2}\log(1/\delta)$ was then established by \citet{mannor2004sample} (later it was refined and simplified by \citet{kaufmann2016complexity}). 
The lower upper confidence bound (LUCB) algorithm designed by \citet{kalyanakrishnan2012pac} achieved an upper bound of $\sum_{i\neq \istar}\Delta_i^{-2}\log(\sum_{j\neq \istar}\Delta_j^{-2}/\delta)$. 
Eventually, an upper bound of $\sum_{i\neq \istar}\Delta_i^{-2}\log(\log(\Delta_i^{-2})/\delta)$ was achieved by the Exponential-gap Elimation algorithm of \citet{karnin2013almost} and the lil'UCB algorithm of \citet{jamieson2014lil}. \citet{jamieson2014lil} also showed that this upper bound is optimal for two-armed bandits. Later \citet{chen2017towards} provided even tighter upper bounds in the general case based on entropy-like terms determined by the gaps. 

\textbf{Dueling Bandits.} The dueling bandits problem is a well-studied variant of stochastic bandits (cf. \cite{bengs2021preference} for a survey). The aim here is to identify the best arm (according to some rule) by noisy pairwise comparisons. 
This paper focuses on the Condorcet winner: the arm (assuming it exists) that has a probability greater than $1/2$ of beating every other arm.
\citet{haddenhorst2021testification} provided a lower bound of $\sum_{i\neq \istar}\Delta_{i,\istar}^{-2} \equiv \tfrac{1}{2}\mathbf{H}_1$ to identify the Condorcet winner $\istar$, whenever it exists, where $\Delta_{i,\istar}=\mathbf{P}_{i_\star,i}-1/2$ is the probability that arm $\istar$ beats arm $i$, minus $1/2$.
They also provided an upper bound of $\frac{K}{\Delta_{\min,all}^2}$, up to log factors, where $\Delta_{\min,all}=\min_{i\neq j}|\mathbf{P}_{i,j}-1/2|$. 

In an effort to hit the lower bound, prior works introduced a number of strong assumptions to make the problem easier. 
The strong assumptions include total order, strong stochastic transitivity (SST) and stochastic triangle inequality (STI).
Under total ordering, \citet{mohajer2017active} achieved an upper bound of $\frac{K \log\log K}{\min_{i \neq i_*} \Delta_{i,\istar}^2}$. 
Finally, \citet{ren2020sample} achieved an upper bound of $\sum_{i\neq \istar}\Delta_{i,\istar}^{-2} \equiv \tfrac{1}{2}\mathbf{H}_1$ that matches the lower bound up to log factors, but under the strong assumptions of the total ordering setting, SST and STI. 
We emphasize that our work does \emph{not} make these strong assumptions, yet still achieves the optimal sample complexity of $\mathbf{H}_1$, up to log factors.

Regret minimization for dueling bandits has also been studied. Recently \citet{saha2022versatile} achieved the optimal regret bound of $\sum_{i\neq \istar}\frac{\log T}{\Delta_{i,\istar}}$ with the help of Tsallis-inf from \citet{zimmert2021tsallis}. This immediately implies an upper bound of $\frac{1}{\min_{i \neq i_*} \Delta_{i,\istar}}\sum_{i\neq \istar}\frac{1}{\Delta_{i,\istar}}$ with constant probability, up to log factors. 
In the notation of matrix games, this sample complexity is at least $\mathbf{H}_1$, and potentially $\Delta_{\min}^{-1}$ times larger than $\mathbf{H}_1$ for certain instances (see the dueling bandit instance of Section~\ref{sec:experiments}).
To the best of our knowledge, this is the best known sample complexity bound in this setting prior to our work.


\section{FIXED CONFIDENCE NEAR-OPTIMAL ALGORITHM}
Consider a matrix $A\in [-1,1]^{n\times m}$ which has a PSNE $(\istar,\jstar)$. Let $\Delta_{i,j}=|A_{i,j}-A_{\istar,\jstar}|$ for any $i,j$. Let $\mathbf{H_1}=\sum_{i\neq \istar}\frac{1}{\Delta_{i,\jstar}^2}+\sum_{j\neq \jstar}\frac{1}{\Delta_{\istar,j}^2}$. Let us assume that $A_{i,\jstar}< A_{\istar,\jstar}< A_{\istar,j}$ for any $(i,j)\neq (\istar,\jstar)$, otherwise $\mathbf{H_1}$ is not well-defined. Now we state the main result.
\begin{theorem}\label{theorem-main}
    There is an algorithm (\Cref{alg-meta}) that, with probability at least $1-\delta$, takes at most $c_0\cdot \mathbf{H_1}\cdot \log(\frac{nm\log(\mathbf{H_1})}{\delta})\cdot \log (nm)$ samples from the input matrix $A$ and returns the PSNE $(\istar,\jstar)$. Here $c_0$ is an absolute constant.
\end{theorem}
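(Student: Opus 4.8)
The plan is to establish the guarantee in two stages---correctness and sample complexity---both conditioned on a single high-probability \emph{good event}. Define $\mathcal{E}$ to be the event that at every step of \Cref{alg-meta} and for every entry $(i,j)$ the running empirical mean lies within its stated confidence radius of $A_{i,j}$; writing $\mathrm{U}_{i,j}$ and $\mathrm{L}_{i,j}$ for the resulting upper and lower confidence bounds, on $\mathcal{E}$ we have $\mathrm{L}_{i,j}\le A_{i,j}\le \mathrm{U}_{i,j}$ throughout. Using $1$-sub-Gaussian tail bounds together with an anytime union bound over the $nm$ entries and the $O(\log \mathbf{H_1})$ dyadic confidence scales the algorithm actually reaches, $\mathbb{P}(\mathcal{E})\ge 1-\delta$; this union bound is precisely what produces the $\log\big(\tfrac{nm\log(\mathbf{H_1})}{\delta}\big)$ factor.

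For correctness I would exploit the minimax characterization of the PSNE: the standing assumption $A_{i,\jstar}<A_{\istar,\jstar}<A_{\istar,j}$ is equivalent to $A_{\istar,\jstar}=\min_j\max_i A_{i,j}=\max_i\min_j A_{i,j}=:v$. Maintain the value estimates $\underline{v}=\max_i\min_j \mathrm{L}_{i,j}$ and $\bar{v}=\min_j\max_i \mathrm{U}_{i,j}$, which on $\mathcal{E}$ sandwich the truth, $\underline{v}\le v\le\bar{v}$. The elimination rules are then: discard row $i$ if some column $j$ has $\mathrm{U}_{i,j}<\underline{v}$, and discard column $j$ if some row $i$ has $\mathrm{L}_{i,j}>\bar{v}$. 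On $\mathcal{E}$ the first rule fires only at rows with $\min_j A_{i,j}<v$, hence never at $\istar$ (for which every entry satisfies $\mathrm{U}_{\istar,j}\ge A_{\istar,j}\ge v\ge\underline v$); symmetrically $\jstar$ is never discarded. Thus the active sets always contain $(\istar,\jstar)$ and the algorithm can only return the true PSNE.

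The substance is the sample-complexity bound, which I would prove by charging each sample to a gap. The first step is a budget lemma: writing $w_{i,j}$ for the width of entry $(i,j)$'s interval, row $i\ne\istar$ is eliminated as soon as $w_{i,\jstar}=O(\Delta_{i,\jstar})$ and $v-\underline v=O(\Delta_{i,\jstar})$; since $\underline v\ge v-w_{\istar,\jstar}$ once each optimal-row entry $(\istar,j)$ satisfies $w_{\istar,j}\le\Delta_{\istar,j}$, the value gap is controlled by sampling the central entry to $w_{\istar,\jstar}=O(\Delta_{\min})$ and each $(\istar,j)$ to $O(\Delta_{\istar,j})$. Summing these per-entry counts over the \emph{cross} through $(\istar,\jstar)$ gives $\Delta_{\min}^{-2}+\sum_{i\neq\istar}\Delta_{i,\jstar}^{-2}+\sum_{j\neq\jstar}\Delta_{\istar,j}^{-2}=O(\mathbf{H_1})$, and the symmetric analysis for columns (via $\bar v$) reuses the same entries; the phased elimination structure, which shrinks the candidate pool geometrically over $O(\log nm)$ rounds, supplies the final $\log(nm)$ factor.

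The main obstacle is that the clean cross-only count presumes the algorithm samples only the true cross, whereas $(\istar,\jstar)$ is unknown and the algorithm must sample crosses through provisional pivots $(\widehat i,\widehat j)$ that may be wrong. The crux is therefore to bound the samples spent on a wrong pivot: I would show that sampling the cross through an incorrect $\widehat i\ne\istar$ drives $\min_j\mathrm{U}_{\widehat i,j}$ below $\underline v$ within a budget charged to $\Delta_{\widehat i,\jstar}^{-2}$, so each wrong row is eliminated at a cost already counted in $\mathbf{H_1}$, and likewise for wrong columns; since an eliminated pivot never recurs, the total over all provisional pivots telescopes back to $O(\mathbf{H_1})$ up to the stated logarithmic factors. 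Making this charging precise---ensuring no entry outside the eventual cross is oversampled and that the coupling between tightening $\underline v,\bar v$ and eliminating arms does not inflate the count---is the technical heart of the proof.
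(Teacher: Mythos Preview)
Your proposal describes an LUCB-style confidence-interval algorithm, but that is not what \Cref{alg-meta} does. The actual algorithm runs a doubling loop over guesses $\Delta_t=2^{-(t-1)}$ of the aggregate gap $\Delta_g=\sqrt{(n+m-2)/\mathbf{H_1}}$; for each guess it calls \textsc{MidSearch}, which performs successive halving by (i) estimating, via a median-of-means subroutine, a value near the \emph{median} of each column (or row), (ii) selecting the column with the smallest such value, and (iii) sampling only that single column to discard half the rows. Correctness rests on the structural fact (Proposition~\ref{prop:deltag-1}) that in any surviving set $S$ of rows, all but $|S|/4$ of the gaps $\Delta_{i,\jstar}$ are at least $\tfrac12\sqrt{|S|/(n+m-2)}\,\Delta_g$; this is what lets each halving phase succeed with a fixed $O\big(\tfrac{n+m-2}{\Delta^2}\log\tfrac{nm}{\delta}\big)$ budget independent of which rows remain. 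A separate \textsc{Verify} routine certifies the candidate, and the doubling loop supplies the $\log(\log\mathbf{H_1})$ inside the logarithm. None of the quantities $\underline v,\bar v,\mathrm{U}_{i,j},\mathrm{L}_{i,j}$ you introduce appear in the algorithm or its analysis.

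More substantively, your charging argument has a circularity that is precisely why LUCB-type schemes incur the extra $nm/\tilde\Delta$ term noted in the paper's related work. To eliminate a wrong pivot row $\widehat i$ via $\min_j \mathrm{U}_{\widehat i,j}<\underline v$, you need $\underline v$ already above $A_{\widehat i,\jstar}$; but $\underline v=\max_i\min_j \mathrm{L}_{i,j}$ can only approach $v$ once \emph{every} entry of row $\istar$ has been sampled enough that $\mathrm{L}_{\istar,j}\ge v-\varepsilon$ for all $j$. While you are pivoting on wrong rows you are not refining row $\istar$ as a whole (only the single entry $(\istar,\widehat j)$), so $\underline v$ can stay trivially small and your per-pivot budget of $\Delta_{\widehat i,\jstar}^{-2}$ is not justified. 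The paper's median-halving trick is exactly the device that breaks this circularity: it never needs a global value estimate, and it never needs to sample more than one column (or row) per phase, which is why the $nm$ term disappears. Your sketch does not contain an analogue of this idea.
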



For the sake of simplicity of presentation, let us assume that $n$ and $m$ are powers of two.
Let $\Delta_g:=\left(\frac{1}{n+m-2}\left(\sum_{i\neq \istar}\frac{1}{\Delta_{i,\jstar}^2}+\sum_{j\neq \jstar}\frac{1}{\Delta_{\istar,j}^2}\right)\right)^{-1/2}$. 
Our goal is to identify $(\istar,\jstar)$ with a sample complexity that scales as $(n+m-2) \Delta_g^{-2} = \mathbf{H}_1$, ignoring log factors. 

We divide the analysis into multiple parts. First, in \Cref{sec:gap}, we design an algorithm using a guess of $\Delta_g$. Next, in \Cref{sec:mid}, we describe the details of the procedures \cmidval/ and \rmidval/ that were used in \Cref{sec:gap}. In \Cref{sec:nogap}, we state the main algorithm which has a sample complexity of $(n+m-2) \Delta_g^{-2}$, ignoring log factors. Finally, in \Cref{sec:dueling}, we address the special case of dueling bandits. In the interest of space, we refer the reader to \Cref{appendix:omit} for a few omitted  calculations.

\subsection{Algorithm with a guess of $\Delta_g$}\label{sec:gap}

In this section, we aim to design an algorithm to identify the PSNE $(\istar,\jstar)$ when we are given a guess of the parameter $\Delta_g$. 
Informally, the algorithm proceeds in a logarithmic number of stages. 
In each stage it aims to eliminate either half of the sub-optimal rows or sub-optimal columns. 
Suppose at one particular stage the algorithm aims to eliminate half of the sub-optimal rows. 
The algorithm takes samples from each column and computes a value near the median of the column entries. 
Then the algorithm chooses the column $\widehat j$ which has the lowest such value. 
Then the algorithm samples all the entries of the column $\widehat j$ sufficiently and removes those rows whose corresponding entries in column $\widehat j$ are lower than median of the column entries. 

Before formally describing our algorithm, we state the guarantees of the two subroutines \cmidval/ and \rmidval/ from \Cref{sec:mid} that will be used in our algorithm. Consider $\varepsilon,\delta>0$.
Given a set of $n$ arms $\calA$ with means $\mu_1\geq \mu_2\geq \cdots \geq \mu_n$, \cmidval/ outputs a value $\widehat v \in [\mu_{n/2}-\varepsilon,\mu_{n/4+1}+\varepsilon]$ with probability $1-\delta$ by using at most $O(\frac{1}{\varepsilon^2}\log\left(\frac{1}{\delta}\right))$ samples. Similarly, \rmidval/ outputs a value $\widehat v \in [\mu_{3n/4}-\varepsilon,\mu_{n/2+1}+\varepsilon]$ with probability $1-\delta$ by using at most $O(\frac{1}{\varepsilon^2}\log\left(\frac{1}{\delta}\right))$ samples. 
We refer the reader to \Cref{alg-matrix-opt} for a formal description of our algorithm.

\begin{algorithm}[t!]
\caption{\infalg/($A,\Delta,\delta$)}
\begin{algorithmic}[1]
\STATE $\calX_1\gets$ rows of $A$, $\calY_1\gets$ columns of $A$
\FOR{$t=1,2,\ldots$}
\IF{$\max\{|\calX_t|,|\calY_t|\}=2$}
\STATE Sample every element in $\calX_t\times\calY_t$ $\frac{n+m-2}{2}\cdot\frac{50\log(16/\delta)}{\Delta^2}$ times and return the PSNE of the sub-matrix formed by  $\calX_t\times\calY_t$.
\ELSIF{$|\calX_t|\geq |\calY_t|$}
\STATE Let $\calC_j$ denote the elements of column $j$ in the sub-matrix formed by  $\calX_t\times\calY_t$.
\STATE $\varepsilon_t \gets \frac{1}{9}\left(\frac{|\calX_t|}{n+m-2}\right)^{1/2}\cdot\Delta$
\STATE Set $\widehat v_{t,x}(j)\gets \cmidval/(\calC_j,\varepsilon_t,\frac{\delta}{2m^2n^2})$ for each $j\in\calY_t$.
\STATE $\widehat j\gets \arg\min_{j\in\calY_t}\widehat v_{t,x}(j)$ and $\calY_{t+1}\gets \calY_t$
\STATE Sample every entry in $\{(i,\widehat j):i\in \calX_t\}$  $\frac{n+m-2}{|\calX_t|}\cdot\frac{162\log(4n^2m^2/\delta)}{\Delta^2}$ times and compute the empirical average $\bar A_{i,j}$ of these samples. 
\STATE $\calX_{t+1}\gets \{i_1,\ldots,i_{|\calX_t|/2}\}\subset\calX_t=\{i_1,\ldots,i_{|\calX_t|}\}$ where $\bar A_{i_1,\widehat j}\geq\cdots\geq \bar A_{i_{|\calX_t|},\widehat j}$.
\ELSIF{$|\calX_t|<|\calY_t|$}
\STATE Let $\calR_i$ denote the elements of row $i$ in the sub-matrix formed by  $\calX_t\times\calY_t$.
\STATE $\varepsilon_t \gets \frac{1}{9}\left(\frac{|\calY_t|}{n+m-2}\right)^{1/2}\cdot\Delta$
\STATE Set $\widehat v_{t,y}(i)\gets \rmidval/(\calR_i,\varepsilon_t,\frac{\delta}{2m^2n^2})$, for each $i\in\calX_t$.
\STATE $\widehat i\gets \arg\max_{i\in\calX_t}\widehat v_{t,y}(i)$ and $\calX_{t+1}\gets \calX_t$
\STATE Sample every entry in $\{(\widehat i, j):j\in \calY_t\}$ $\frac{n+m-2}{|\calY_t|}\cdot\frac{162\log(4n^2m^2/\delta)}{\Delta^2}$ times and compute the empirical average $\bar A_{i,j}$ of these samples. 
\STATE $\calY_{t+1}\gets \{j_1,\ldots,j_{|\calY_t|/2}\}\subset\calX_t=\{j_1,\ldots,j_{|\calY_t|}\}$ where $\bar A_{\widehat i,j_1}\leq\cdots\leq \bar A_{\widehat i, j_{|\calY_t|}}$.
\ENDIF
\ENDFOR
\end{algorithmic}
\label{alg-matrix-opt}
\end{algorithm}

Now we begin the analysis of \Cref{alg-matrix-opt}.
First, we state the following proposition that provides a lower bound on the gaps $\Delta_{i,\jstar}$.
The lower bound later allows us to argue that a constant fraction of the entries in the optimal column $\jstar$ is well below $A_{\istar,\jstar}$, and this fact plays a crucial role in lowering the number of samples required in each stage.
\begin{proposition}\label{prop:deltag-1}
    Consider a subset $S=\{i_1,\ldots,i_\ell\}\subseteq [n]$ such that $\Delta_{i_1,\jstar}\leq\cdots\leq \Delta_{i_\ell,\jstar}$. Let $\ell \geq 2$.  Then for any $s\geq \lceil\ell/4\rceil+1$, $\Delta_{i_s,\jstar}\geq \frac{1}{2}\left(\frac{|S|}{n+m-2}\right)^{1/2}\Delta_g$.
\end{proposition}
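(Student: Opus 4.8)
The plan is to combine the elementary monotonicity of the map $x \mapsto 1/x^2$ with the identity $\mathbf{H_1} = (n+m-2)\,\Delta_g^{-2}$, which is immediate from the definition of $\Delta_g$. First I would discard the column terms and keep only the bound $\mathbf{H_1} \ge \sum_{i \ne \istar} \Delta_{i,\jstar}^{-2}$. Since the elements of $S$ are sorted so that $\Delta_{i_1,\jstar} \le \cdots \le \Delta_{i_\ell,\jstar}$, the reciprocal squares $\Delta_{i_k,\jstar}^{-2}$ are nonincreasing in $k$, so each of the first $s$ of them is at least $\Delta_{i_s,\jstar}^{-2}$.

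The core estimate is then to bound $\mathbf{H_1}$ from below by the partial sum over the smallest-gap indices. If $\istar \notin S$, then $i_1,\ldots,i_s$ are $s$ distinct rows, all different from $\istar$, hence $\mathbf{H_1} \ge \sum_{k=1}^s \Delta_{i_k,\jstar}^{-2} \ge s\,\Delta_{i_s,\jstar}^{-2}$. Rearranging and substituting $\mathbf{H_1} = (n+m-2)\Delta_g^{-2}$ gives $\Delta_{i_s,\jstar}^2 \ge s\,\Delta_g^2/(n+m-2)$, and since $s \ge \lceil \ell/4\rceil + 1 > \ell/4 = |S|/4$, taking square roots yields the claimed bound with the leading factor $\tfrac12$.

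I expect the only real subtlety, and the reason the hypothesis reads $s \ge \lceil \ell/4\rceil + 1$ rather than $s \ge \ell/4$, to be the possibility that $\istar \in S$. In that case $i_1 = \istar$ (since $\Delta_{\istar,\jstar} = 0$ is the unique smallest gap, the remaining gaps being strictly positive by the standing assumption $A_{i,\jstar} < A_{\istar,\jstar}$), and this zero term is excluded from $\mathbf{H_1}$. I would handle this by dropping the $k=1$ term, so that the partial sum controls only the $s-1$ genuinely positive terms, giving $\mathbf{H_1} \ge (s-1)\,\Delta_{i_s,\jstar}^{-2}$. The extra $+1$ in the hypothesis exactly compensates, since $s - 1 \ge \lceil \ell/4\rceil \ge \ell/4 = |S|/4$, so the identical computation again delivers $\Delta_{i_s,\jstar} \ge \tfrac12 \left(|S|/(n+m-2)\right)^{1/2}\Delta_g$, settling both cases.
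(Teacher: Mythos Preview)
Your proof is correct and follows essentially the same approach as the paper's: both drop the column terms, keep only the row contributions to $\mathbf{H_1}$, and use monotonicity of $k\mapsto\Delta_{i_k,\jstar}^{-2}$ to lower-bound a partial sum by $\lceil\ell/4\rceil\cdot\Delta_{i_s,\jstar}^{-2}$. The only cosmetic difference is that the paper handles the $\istar\in S$ issue uniformly by always summing from $k=2$ to $\lceil\ell/4\rceil+1$ (thereby discarding $i_1$ regardless), whereas you split into the two cases $\istar\notin S$ and $\istar\in S$; your case split makes transparent exactly why the hypothesis needs the ``$+1$'', which the paper's proof leaves implicit.
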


\begin{proof}
    Consider an index $s\geq \lceil\ell/4\rceil+1$. Now we have the following:
    \begin{align*}
        \left(\frac{|S|}{n+m-2}\right)^{-1}\Delta_g^{-2}&=\frac{1}{|S|}\left(\sum_{i\neq \istar}\frac{1}{\Delta_{i,\jstar}^2}+\sum_{j\neq \jstar}\frac{1}{\Delta_{\istar,j}^2}\right)\\
        &\geq \frac{1}{|S|}\sum_{i\in S\setminus\{\istar\}}(\Delta_{i,\jstar})^{-2}\\
        &\geq \frac{1}{|S|}\sum_{k=2}^{\lceil\ell/4\rceil+1}(\Delta_{i_k,\jstar})^{-2} \\
        &\geq \frac{\lceil\ell/4\rceil}{|S|}\cdot \Delta_{i_s,\jstar}^{-2}\\
        &\geq(2\Delta_{i_s,\jstar})^{-2},
    \end{align*}
    where the second to last inequality holds since $\Delta_{i_s,\jstar}\geq\Delta_{i_k,\jstar}$ for all $k\leq \lceil\ell/4\rceil +1$, and the last inequality is due to the fact that $|S|=\ell$. 
\end{proof}
Similarly, we  have the following proposition.
\begin{proposition}\label{prop:deltag-2}
    Consider a subset $S=\{j_1,\ldots,j_\ell\}\subseteq [m]$ such that $\Delta_{\istar,j_1}\leq\cdots\leq \Delta_{\istar,j_\ell}$. Let $\ell\geq 2$.  Then for any $s\geq \lceil\ell/4\rceil+1$, $\Delta_{\istar,j_s}\geq \frac{1}{2}\left(\frac{|S|}{n+m-2}\right)^{1/2}\Delta_g$.
\end{proposition}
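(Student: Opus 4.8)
The plan is to mirror the proof of \Cref{prop:deltag-1} verbatim, exchanging the roles of rows and columns: wherever that argument used the row gaps $\Delta_{i,\jstar}$ and the first summand of $\mathbf{H_1}$, I would instead use the column gaps $\Delta_{\istar,j}$ and the second summand. The key point enabling this transfer is that the definition of $\Delta_g$ is symmetric in its two sums, so the same normalization $\left(\frac{|S|}{n+m-2}\right)^{-1}\Delta_g^{-2}=\frac{1}{|S|}\left(\sum_{i\neq\istar}\Delta_{i,\jstar}^{-2}+\sum_{j\neq\jstar}\Delta_{\istar,j}^{-2}\right)$ is available here as the starting identity.

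First I would discard the row sum entirely and restrict the column sum to indices lying in $S$, obtaining the lower bound $\frac{1}{|S|}\sum_{j\in S\setminus\{\jstar\}}\Delta_{\istar,j}^{-2}$. Excluding $\jstar$ is the safety step that avoids the undefined term $\Delta_{\istar,\jstar}^{-2}$ in the event $\jstar\in S$; since $j_1$ carries the smallest gap (possibly zero), this is handled transparently by beginning the subsequent sum at the index $k=2$.

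Next I would keep only the $\lceil\ell/4\rceil$ terms with the largest reciprocals, namely those indexed $j_2,\ldots,j_{\lceil\ell/4\rceil+1}$, giving $\frac{1}{|S|}\sum_{k=2}^{\lceil\ell/4\rceil+1}\Delta_{\istar,j_k}^{-2}$. Because $s\geq\lceil\ell/4\rceil+1$ and the gaps are sorted increasingly, every retained term satisfies $\Delta_{\istar,j_k}\leq\Delta_{\istar,j_s}$, hence $\Delta_{\istar,j_k}^{-2}\geq\Delta_{\istar,j_s}^{-2}$; this collapses the sum to $\frac{\lceil\ell/4\rceil}{|S|}\Delta_{\istar,j_s}^{-2}$. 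Finally, $|S|=\ell$ gives $\lceil\ell/4\rceil/\ell\geq 1/4$, so the entire chain is at least $(2\Delta_{\istar,j_s})^{-2}$, and rearranging yields the claimed bound $\Delta_{\istar,j_s}\geq\frac{1}{2}\left(\frac{|S|}{n+m-2}\right)^{1/2}\Delta_g$.

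I do not anticipate a genuine obstacle, as this is simply the column-analogue of an already established result. The only care required is bookkeeping: verifying that the index $s$ sits at or beyond the end of the retained block, so that $\Delta_{\istar,j_s}$ indeed dominates all retained gaps, and that the possibly-zero smallest gap is safely removed by starting the retained sum at $k=2$.
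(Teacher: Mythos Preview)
Your proposal is correct and is precisely the approach the paper intends: the paper does not give a separate proof of \Cref{prop:deltag-2} but simply states it as the analogue of \Cref{prop:deltag-1}, and your swap of row gaps for column gaps, together with the same chain of inequalities and the same handling of the possibly-zero smallest gap via the $k=2$ starting index, reproduces that argument verbatim.
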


Next, we establish the sample complexity of our algorithm in the following lemma.
\begin{lemma}\label{lem:psne-sample}
    The sample complexity of the procedure \infalg/($A,\Delta,\delta$) is $c\cdot \frac{n+m-2}{\Delta^2}\cdot \log\left( \frac{nm}{\delta}\right)\cdot \log(nm)$  where $c$ is an absolute constant.
\end{lemma}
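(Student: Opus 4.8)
The plan is to bound the total sample count by establishing two facts separately: (i) each iteration $t$ of the \textbf{for} loop in \Cref{alg-matrix-opt} consumes at most $O\!\left(\frac{n+m-2}{\Delta^2}\log\!\left(\frac{nm}{\delta}\right)\right)$ samples, \emph{uniformly} over the current subset sizes $|\calX_t|,|\calY_t|$; and (ii) the loop runs for at most $\log_2(nm)$ iterations. Multiplying these two bounds and absorbing the base case immediately yields the claimed $c\cdot\frac{n+m-2}{\Delta^2}\log\!\left(\frac{nm}{\delta}\right)\log(nm)$. Throughout I would use that each of $\log(16/\delta)$, $\log(2n^2m^2/\delta)$, and $\log(4n^2m^2/\delta)$ is $O(\log(nm/\delta))$, so every logarithmic factor appearing in the algorithm can be folded into a single $\log(nm/\delta)$.

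For step (i), consider the branch $|\calX_t|\ge|\calY_t|$ (the other branch is symmetric). The cost has two pieces. The first is the $|\calY_t|$ calls to \cmidval/ at accuracy $\varepsilon_t=\frac19\left(\frac{|\calX_t|}{n+m-2}\right)^{1/2}\Delta$; since each call costs $O\!\left(\frac{1}{\varepsilon_t^2}\log\frac{nm}{\delta}\right)$ and $\frac{1}{\varepsilon_t^2}=81\cdot\frac{n+m-2}{|\calX_t|\,\Delta^2}$, the total is $O\!\left(\frac{|\calY_t|}{|\calX_t|}\cdot\frac{n+m-2}{\Delta^2}\log\frac{nm}{\delta}\right)$, and the branch condition $|\calY_t|\le|\calX_t|$ makes the ratio $|\calY_t|/|\calX_t|\le 1$, removing the dependence on the subset sizes. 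The second piece is the re-sampling of the single column $\widehat j$: there are $|\calX_t|$ entries, each sampled $\frac{n+m-2}{|\calX_t|}\cdot\frac{162\log(4n^2m^2/\delta)}{\Delta^2}$ times, which multiplies to $(n+m-2)\cdot\frac{162\log(4n^2m^2/\delta)}{\Delta^2}$, again independent of $|\calX_t|$. Summing the two pieces yields the desired per-iteration bound; the $|\calX_t|<|\calY_t|$ branch is handled identically with \rmidval/ and the roles of rows and columns exchanged, where now the branch condition $|\calX_t|<|\calY_t|$ is exactly what is needed to control the \rmidval/ cost.

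For step (ii), note that every iteration except the last halves whichever of $|\calX_t|,|\calY_t|$ is the larger, so the quantity $\log_2|\calX_t|+\log_2|\calY_t|$ decreases by exactly one per iteration. It starts at $\log_2 n+\log_2 m=\log_2(nm)$ and, since we only ever halve a dimension of size at least $4$, no dimension drops below $2$; hence the loop reaches $|\calX_t|=|\calY_t|=2$ after $\log_2(nm)-2$ elimination stages, followed by one execution of the base case. The base case samples at most $4$ entries, each $\frac{n+m-2}{2}\cdot\frac{50\log(16/\delta)}{\Delta^2}$ times, for a total of $O\!\left(\frac{n+m-2}{\Delta^2}\log\frac1\delta\right)$, which is dominated by a single elimination stage. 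Combining (i) and (ii) gives the lemma.

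The argument is essentially bookkeeping, so there is no deep obstacle; the one place that requires care is verifying that the two cancellations in step (i) — the $|\calY_t|/|\calX_t|$ ratio in the \cmidval/ cost and the $|\calX_t|\cdot\tfrac{1}{|\calX_t|}$ cancellation in the column re-sampling cost — both hold, and in particular that the branch condition $|\calX_t|\ge|\calY_t|$ (resp.\ $|\calX_t|<|\calY_t|$) is precisely the inequality needed to bound the midval-subroutine cost. This is exactly the point of the size-dependent choice of $\varepsilon_t$ and of the $\frac{n+m-2}{|\calX_t|}$ factor in the per-entry sample counts, which are engineered to make each stage's cost independent of the current subset sizes.
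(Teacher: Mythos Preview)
Your proof is correct and follows essentially the same approach as the paper: you bound the per-iteration cost by showing the two cancellations (the $|\calY_t|/|\calX_t|\le 1$ ratio for the midval calls and the $|\calX_t|\cdot\tfrac{1}{|\calX_t|}$ cancellation for the column re-sampling), then multiply by the $O(\log(nm))$ iteration count. The paper phrases the iteration count as $|\calT_1|+|\calT_2|=(\log_2 n-1)+(\log_2 m-1)$ rather than via the potential $\log_2|\calX_t|+\log_2|\calY_t|$, but this is the same accounting.
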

\begin{proof}
Let $\calT_1:=\{t:|\calX_{t+1}|=\frac{|\calX_t|}{2}\}$ and $\calT_2:=\{t: |\calY_{t+1}|=\frac{|\calY_t|}{2}\}$, and consider an iteration $t\in \calT_1$.
Recall that $\varepsilon_t= \frac{1}{9}\left(\frac{|\calX_t|}{n+m-2}\right)^{1/2}\cdot\Delta$.
Due to \Cref{midval:sample}, $\cmidval/(\calC_j,\varepsilon_t,\frac{\delta}{2m^2n^2})$ takes $\frac{c\cdot\log(2n^2m^2/\delta)}{\varepsilon_t^2}$ samples where $c$ is an absolute constant.
Hence, the total number of samples taken in the iteration $t$ is upper bounded by 
$|\calY_t|\cdot \tfrac{n+m-2}{|\calX_t|}\cdot\tfrac{81c\cdot\log(2n^2m^2/\delta)}{\Delta^2}
   +|\calX_t|\cdot \frac{n+m-2}{|\calX_t|}\cdot\tfrac{162\log(4n^2m^2/\delta)}{\Delta^2}\leq (c+1)\cdot (n+m-2)\cdot\tfrac{162\log(4n^2m^2/\delta)}{\Delta^2}$, since
 $|\calY_t|\leq |\calX_t|$. By an identical argument, this bound applies to each iteration $t\in \calT_2$.


Finally consider iteration $t$ such that $\max\{|\calX_t|,|\calY_t|\}=2$. The total number of samples taken in this iteration $t$ is upper bounded by $4\cdot\frac{n+m-2}{2}\cdot\frac{50\log(16/\delta)}{\Delta^2}$.

We now get the desired sample complexity as $|\calT_1|=\log_2(n)-1$ and $|\calT_2|=\log_2(m)-1$. 
\end{proof}
Finally, we establish the correctness of our algorithm in the following lemma. The proof of the lemma makes use of the fact that the entries around the median of column $\jstar$ are far below $A_{\istar,\jstar}$. The sampling done from each column ensures that the entries around the median of column $\widehat j$ are also far below $A_{\istar,\jstar}$. This along with the fact that $A_{\istar,\widehat j}\geq A_{\istar,\jstar}$ ensures that we don't eliminate the row $\istar$.
\begin{lemma}\label{lem:psne-correct}
    If $\Delta\leq \Delta_g$, then the procedure \infalg/(A,$\Delta,\delta$) returns $(\istar,\jstar)$ with probability at least $1-\delta$.
\end{lemma}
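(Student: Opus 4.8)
The plan is to condition on a single high-probability ``good event'' $\mathcal{G}$ and then argue \emph{deterministically} that, on $\mathcal{G}$, the optimal row and column are never discarded, so that once $\max\{|\calX_t|,|\calY_t|\}=2$ the base case must return $(\istar,\jstar)$. The event $\mathcal{G}$ asks that (i) every call to \cmidval/ and \rmidval/ returns a value inside its advertised interval, and (ii) every empirical average $\bar A_{i,j}$ formed by the algorithm lies within $\beta_t:=\tfrac19(\tfrac{|\calX_t|}{n+m-2})^{1/2}\Delta$ of its mean (resp.\ the analogous width with $|\calY_t|$ for a column stage). Each subroutine call fails with probability at most $\tfrac{\delta}{2m^2n^2}$ by assumption, and, plugging the per-entry sample count $\tfrac{n+m-2}{|\calX_t|}\cdot\tfrac{162\log(4n^2m^2/\delta)}{\Delta^2}$ into the sub-Gaussian tail bound, each empirical average concentrates to width $\beta_t$ with failure probability at most $\tfrac{\delta}{2n^2m^2}$ (the key computation is that this sample count makes $\beta_t$ equal to $\varepsilon_t$ with $\Delta$ in place of $\Delta_g$). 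Since there are only $O(nm\log(nm))$ such events across the $\log_2 n+\log_2 m$ stages, a union bound gives $\Pr(\mathcal{G})\ge 1-\delta$.

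The heart of the argument is the inductive claim that $\istar\in\calX_t$ and $\jstar\in\calY_t$ for every $t$, which I would prove by induction on $t$, treating a row-elimination stage $t\in\calT_1$ (the column case being symmetric). Assume $\jstar\in\calY_t$. Since the algorithm picks $\widehat j=\arg\min_{j}\widehat v_{t,x}(j)$, we have $\widehat v_{t,x}(\widehat j)\le \widehat v_{t,x}(\jstar)$. The \cmidval/ guarantee bounds $\widehat v_{t,x}(\jstar)$ by the $(|\calX_t|/4+1)$-th largest entry of column $\jstar$ (restricted to $\calX_t$) plus $\varepsilon_t$; because $A_{i,\jstar}<A_{\istar,\jstar}$, ordering the column by decreasing value coincides with ordering by increasing gap $\Delta_{i,\jstar}$, so \Cref{prop:deltag-1} applied to $S=\calX_t$ lower bounds that gap by $\tfrac12(\tfrac{|\calX_t|}{n+m-2})^{1/2}\Delta_g$. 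Combining these, and using $\varepsilon_t\le\tfrac19(\tfrac{|\calX_t|}{n+m-2})^{1/2}\Delta_g$ (valid since $\Delta\le\Delta_g$), gives $\widehat v_{t,x}(\widehat j)\le A_{\istar,\jstar}-\tfrac{7}{18}(\tfrac{|\calX_t|}{n+m-2})^{1/2}\Delta_g$. Feeding this back through the \emph{lower} end of the \cmidval/ interval for the chosen column $\widehat j$ shows the $(|\calX_t|/2)$-th largest true entry of $\widehat j$ is at most $A_{\istar,\jstar}-\tfrac{5}{18}(\tfrac{|\calX_t|}{n+m-2})^{1/2}\Delta_g$; equivalently, at least half of the rows $i$ satisfy $A_{i,\widehat j}\le A_{\istar,\jstar}-\tfrac{5}{18}(\tfrac{|\calX_t|}{n+m-2})^{1/2}\Delta_g$.

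I would then close the induction via concentration. Because $\istar$ attains the minimum of its own row, $A_{\istar,\widehat j}\ge A_{\istar,\jstar}$, so $\istar$'s entry exceeds each of those $\ge|\calX_t|/2$ low entries by at least $\tfrac{5}{18}(\tfrac{|\calX_t|}{n+m-2})^{1/2}\Delta_g$. On $\mathcal{G}$ each of the two relevant empirical averages moves by at most $\beta_t\le\tfrac19(\tfrac{|\calX_t|}{n+m-2})^{1/2}\Delta_g$, and since $2\cdot\tfrac19=\tfrac{4}{18}<\tfrac{5}{18}$ we get $\bar A_{\istar,\widehat j}>\bar A_{i,\widehat j}$ for all those rows; hence $\istar$ beats at least $|\calX_t|/2$ rows empirically, lands among the top $|\calX_t|/2$, and survives into $\calX_{t+1}$. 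The mirror-image argument with \rmidval/ and \Cref{prop:deltag-2} keeps $\jstar$ alive at every $t\in\calT_2$. Both therefore survive until $\max\{|\calX_t|,|\calY_t|\}=2$, where \Cref{prop:deltag-1,prop:deltag-2} lower bound the remaining gaps by $\tfrac{1}{\sqrt2}(\tfrac{1}{n+m-2})^{1/2}\Delta_g$ and the heavy base-case sampling resolves all of them, so the empirical PSNE of the surviving submatrix equals $(\istar,\jstar)$. The main obstacle is purely the bookkeeping of constants along the chain \cmidval/-guarantee $\to$ \Cref{prop:deltag-1} $\to$ concentration: one must check that the median-approximation error, the gap lower bound, and the empirical fluctuation combine so that the surviving margin ($\tfrac{5}{18}$ of the scale) strictly dominates twice the fluctuation ($\tfrac{2}{9}$ of the scale). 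This is exactly why the prefactors $\tfrac19$ (in $\varepsilon_t$), $162$, and $50$ were chosen, and why $\Delta\le\Delta_g$ is the precise hypothesis needed; getting any of these constants wrong breaks survival.
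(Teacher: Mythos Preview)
Your proof is correct and follows essentially the same argument as the paper's: the chain \cmidval/-guarantee $\to$ \Cref{prop:deltag-1} $\to$ sub-Gaussian concentration, with the constants $\tfrac{7}{18}$, $\tfrac{5}{18}$, and $\tfrac{2}{9}$ combining exactly as you describe, is precisely what the paper does. The only cosmetic difference is that you package everything into a single global good event $\mathcal{G}$ with one union bound, whereas the paper bounds the per-stage survival probability and chains them via $\prod_t p_t \ge (1-\delta/(nm))^{t_\star}\ge 1-\delta/2$; the two are equivalent (just be sure your union bound also absorbs the base-case failure, whose per-entry error is $\delta/8$ rather than $\delta/(2n^2m^2)$).
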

\begin{proof}
    Let us consider the case when $\min\{n,m\}\geq 4$. The other cases can be proved analogously. 
    Let \[\calT_1:=\{t:|\calX_{t+1}|=\tfrac{|\calX_t|}{2}\}\ \ \text{and}\ \ \calT_2:=\{t: |\calY_{t+1}|=\tfrac{|\calY_t|}{2}\}.\]
    Consider an iteration $t\in \calT_1$ such that $|\calX_t|\geq 4$ and let us assume that $\istar\in\calX_t$ and $\jstar\in\calY_t$. Observe that $\calY_{t+1}=\calY_t$. We now show that with probability at least $1-\frac{\delta}{nm}$, we have $\istar\in\calX_{t+1}$. Let $\Delta_t=\left(\frac{|\calX_t|}{n+m-2}\right)^{1/2}\Delta_g$. Recall the definition of $\varepsilon_t$ and observe that $\varepsilon_t\leq\Delta_t/9$.

    Consider an index $j\in\calY_t$. Let us relabel the indices in $\calX_t$ as $\{i_1^{(j)},\ldots,i_{|\calX_t|}^{(j)}\}$ such that $A_{i_1^{(j)},j}\geq\ldots \geq A_{i_{|\calX_t|}^{(j)},j}$. Let $G_j$ be the event that $\widehat v_{t,x}(j)\in[A_{i_{|\calX_t|/2}^{(j)},j}-\varepsilon_t,A_{i_{|\calX_t|/4+1}^{(j)},j}+\varepsilon_t]$. Due to \Cref{midval:correct1}, event $G_j$ holds with probability at least $1-\frac{\delta}{2m^2n^2}$.

    Let us assume that $G_j$ holds for all $j\in \calY_t$. This happens with with probability at least $1-\frac{\delta}{2n^2m}$ due to a union bound. As event $G_{\jstar}$ holds, we have the following due to \Cref{prop:deltag-1}:
    \begin{align*}
        \widehat v_{t,x}(\jstar)&\leq A_{i_{|\calX_t|/4+1}^{(\jstar)},\jstar}+\frac{\Delta_t}{9},\\
        &\leq A_{\istar,\jstar}-\frac{\Delta_t}{2}+\frac{\Delta_t}{9}=A_{\istar,\jstar}-\frac{7\Delta_t}{18}.
    \end{align*}
    Recall the definition of $\widehat j$. Since event $G_{\widehat j}$ holds, we have the following for all $\frac{|\calX_t|}{2}\leq s\leq |\calX_t|$:
    \begin{equation*}
        A_{i^{(\widehat j)}_s,\widehat j}\leq \widehat v_{t,x}(\widehat j)+\frac{\Delta_t}{9}\leq \widehat v_{t,x}(\jstar)+\frac{\Delta_t}{9}\leq A_{\istar,\jstar}-\frac{5\Delta_t}{18}.
    \end{equation*}

    Let us assume that for all $i\in \calX_t $, $|\bar A_{i,\widehat j}-A_{i,\widehat j}|\leq \frac{\Delta_t}{9}$. This happens with probability at least $1-\frac{\delta}{2nm^2}$ due to the sub-Gaussian tail bound and union bound. We now have the following for all $\frac{|\calX_t|}{2}\leq s\leq |\calX_t|$:
    \begin{equation*}
       \bar A_{i^{\widehat j}_s,\widehat j}\leq A_{i^{\widehat j}_s,\widehat j} +\frac{\Delta_t}{9 }\leq A_{\istar,\jstar}-\frac{3\Delta_t}{18}.
    \end{equation*}
    This implies that $|\{i\in\calX_t:\bar A_{i,\widehat j}\leq  A_{\istar,\jstar}-\frac{3\Delta_t}{18}\}|\geq \frac{|\calX_t|}{2}$. Recall that $\calX_{t+1}=\{i_1,\ldots,i_{|\calX_t|/2}\}\subset\calX_t=\{i_1,\ldots,i_{|\calX_t|}\}$ where $\bar A_{i_1,\widehat j}\geq\cdots\geq \bar A_{i_{|\calX_t|},\widehat j}$. Now observe that $\bar A_{\istar,\widehat j}\geq A_{\istar,\widehat j}-\frac{\Delta_t}{9}\geq A_{\istar,\jstar}-\frac{\Delta_t}{9}$. Hence, $\istar\in\calX_{t+1}$.

    Analogously we can show that for an iteration $t\in \calT_2$ such that $|\calY_t|\geq 4$, $\istar\in\calX_t$ and $\ystar\in\calY_t$, with probability at least $1-\frac{\delta}{nm}$ we have $\istar\in\calX_{t+1}$ and $\ystar\in\calY_{t+1}$.

    Observe that the algorithm terminates at a fixed iteration $t_\star=\log_2(nm)-1$.
    Let $p_t=\bbP(\istar\in \calX_{t},\jstar\in\calY_t|\istar\in \calX_{t-1},\jstar\in\calY_{t-1})$.
    Now we have the following due to the chain rule and Bernoulli's inequality,
    \begin{equation*}
\bbP(\istar\in\calX_{\tstar},\jstar\in\calY_{\tstar})=\prod_{t=2}^{t_\star}p_t\geq \left(1-\tfrac{\delta}{nm}\right)^{t_\star}\geq 1-\delta/2.
    \end{equation*}
    Hence, with probability at least $1-\delta/2$ we have an iteration $t_\star$ such that $\calX_{t_\star}= \{\istar,\widehat i\}$ and $\calY_{t_\star}= \{\jstar,\widehat j\}$. Under the assumption of the existence of such an iteration $t_\star$, we now show that with probability at least $1-\delta/2$ we return $(\istar,\jstar)$. Let $\Delta_{t_\star}=\left(\frac{|\calX_{t_\star}|}{n+m-2}\right)^{1/2}\Delta_g$. For all $(i,j)\in \calX_{t_\star}\times \calY_{t_\star}$, let $\bar A_{i,j}$ denote the empirical average of the $\frac{n+m-2}{|\calX_{t_\star}|}\cdot\frac{50\log(16/\delta)}{\Delta_g^2}$ samples. Let us assume that for all $(i,j)\in \calX_{t_\star}\times \calY_{t_\star} $, $|\bar A_{i,j}-A_{i,j}|\leq \frac{\Delta_{t_\star}}{5}$. This happens with probability at least $1-\delta/2$ due to sub-gaussian tail bound and union bound. Now observe that $A_{\istar,\jstar}-\frac{\Delta_{t_\star}}{5}\leq \bar A_{\istar,\jstar}\leq A_{\istar,\jstar}+\frac{\Delta_{t_\star}}{5}$. Due to \Cref{prop:deltag-2}, we have $\bar A_{\istar,\widehat j}\geq A_{\istar,\widehat j}-\frac{\Delta_{t_\star}}{5}\geq A_{\istar,\jstar}+\frac{\Delta_{t_\star}}{2}-\frac{\Delta_{t_\star}}{5}>\bar A_{\istar,\jstar}$. Similarly due to \Cref{prop:deltag-1}, we have $\bar A_{\widehat i,\jstar}\leq A_{\widehat i,\jstar}+\frac{\Delta_{t_\star}}{5}\leq A_{\istar,\jstar}-\frac{\Delta_{t_\star}}{2}+\frac{\Delta_{t_\star}}{5}<\bar A_{\istar,\jstar}$. Hence, we return $(\istar,\jstar)$.
\end{proof}

\subsection{\cmidval/ \& \rmidval/  Subroutine}\label{sec:mid}
Here, we define \cmidval/ (\Cref{alg-midval}) which solves the subproblem of finding a value near the median of means of $n$ arms; \rmidval/ is defined analogously in \Cref{appendix:rmidval} along with its guarantees.

Consider $\varepsilon,\delta>0$.
Given a set of $n$ arms $\calA$ with means $\mu_1\geq \mu_2\geq \cdots \geq \mu_n$, we show that \cmidval/ outputs a value $\widehat v \in [\mu_{n/2}-\varepsilon,\mu_{n/4+1}+\varepsilon]$ with probability $1-\delta$ by using at most $O(\frac{1}{\varepsilon^2}\log\left(\frac{1}{\delta}\right))$ samples. We assume that $n$ is a multiple of $4$ for simplicity.

\begin{algorithm}[ht]
\caption{\cmidval/($\calA,\varepsilon,\delta$)}
\begin{algorithmic}[1]
\STATE $\ell\gets \lceil 14\log(\frac{1}{\delta})\rceil$, $\delta_1\gets 0.05$, $\delta_2\gets 0.05$, $k\gets \left\lceil 108\log(\frac{4}{\delta_2})\right\rceil$ and $z\gets \frac{k}{3}+1$.
\FOR{$i=1$ to $\ell$}
\STATE $\calB_i \gets \{k $ arms sampled independently and uniformly at random from $\calA \}$.
\STATE Sample each arm $j\in \calB_i$ $\left\lceil\frac{2\log(\frac{2k}{\delta_1})}{\varepsilon^2}\right\rceil$ times and compute the empirical mean $\widehat \mu_j$.
\STATE $v_i\gets $ $z$-th highest value in $\{\widehat \mu_j:j\in\calB_i\}$
\ENDFOR
\STATE \textbf{return} the median of $\{v_i:i\in[\ell]\}$
\end{algorithmic}
\label{alg-midval}
\end{algorithm}

We first establish the sample complexity of \cmidval/ in the following lemma.
\begin{lemma}\label{midval:sample} \cmidval/($\calA,\varepsilon,\delta$) requires at most $\frac{c\log(1/\delta)}{\varepsilon^2}$ samples.
\end{lemma}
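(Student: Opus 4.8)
The plan is to read the total number of observations directly off the three nested quantities in \Cref{alg-midval} and then verify that every factor other than $\log(1/\delta)$ and $\varepsilon^{-2}$ is an absolute constant. The key observation is that the only line drawing noisy observations from the arms is the inner sampling step: the random choice of the subset $\calB_i$ (of size $k$) consumes no observations from the arms, and both the $z$-th order statistic $v_i$ and the final median are purely deterministic post-processing. Hence the total sample count is exactly
\[
\ell \cdot k \cdot \left\lceil \frac{2\log(2k/\delta_1)}{\varepsilon^2}\right\rceil,
\]
the product of the number of loop iterations $\ell$, the number of arms queried per iteration $k$, and the number of pulls per arm.

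Next I would substitute the prescribed parameter values. Since $\delta_1=\delta_2=0.05$ are fixed, $k=\lceil 108\log(4/\delta_2)\rceil$ is an absolute constant, and consequently $\log(2k/\delta_1)$ is an absolute constant as well. The remaining factor $\ell=\lceil 14\log(1/\delta)\rceil = O(\log(1/\delta))$ supplies the entire confidence dependence. To handle the ceilings cleanly I would invoke $\lceil x\rceil\le 2x$ for $x\ge 1$: in the only regime of interest one has $2\log(2k/\delta_1)/\varepsilon^2\ge 1$ (otherwise a single pull per arm already suffices), so the inner ceiling is at most $4\log(2k/\delta_1)/\varepsilon^2$. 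Multiplying the three bounds and folding the constants $k$, $4\log(2k/\delta_1)$, and the $O(1)$ factor from $\ell$ into a single absolute constant $c$ yields the claimed bound $c\log(1/\delta)/\varepsilon^2$.

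I expect no substantive obstacle here: once it is established that only the inner sampling step draws observations and that $\delta_1,\delta_2$ are fixed constants (so that $k$ and $\log(2k/\delta_1)$ are constants), the result is essentially a one-line arithmetic consequence. The only point requiring a modicum of care is the bookkeeping of the ceiling functions, specifically confirming that their additive $+1$ contributions can be absorbed into $c$ in the relevant small-$\varepsilon$ regime.
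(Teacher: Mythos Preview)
Your proposal is correct and mirrors the paper's own proof: both compute the total sample count as $\ell\cdot k\cdot\lceil 2\log(2k/\delta_1)/\varepsilon^2\rceil$, observe that $k$ and $\delta_1$ (hence $\log(2k/\delta_1)$) are absolute constants, and absorb everything but $\log(1/\delta)/\varepsilon^2$ into $c$. Your extra remarks about the ceiling bookkeeping are a harmless elaboration of what the paper leaves implicit.
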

\begin{proof}
As $k,\delta_1$ are constants, the total number of samples is
\[
\left\lceil 14\log\left(\frac{1}{\delta}\right)\right\rceil \cdot k \cdot \left\lceil\frac{2\log(\frac{2k}{\delta_1})}{\varepsilon^2}\right\rceil \leq \frac{c\log(1/\delta)}{\varepsilon^2} 
\]
for an absolute constant $c$.
\end{proof}
Next, we establish the correctness of \cmidval/. The proof follows from the standard median of means argument.
\begin{lemma}\label{midval:correct1}
    Given $n$ arms $\calA$ with means $\mu_1\geq \mu_2\geq \cdots \geq \mu_n$, \cmidval/($\calA,\varepsilon,\delta$) outputs a value $\widehat v \in [\mu_{n/2}-\varepsilon,\mu_{n/4+1}+\varepsilon]$ with probability at least $1-\delta$.
\end{lemma}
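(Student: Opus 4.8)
The plan is to reduce the lemma to a single-iteration statement and then boost it with the median step. Concretely, I would first show that for each fixed $i$ the value $v_i$ lands in the target interval $[\mu_{n/2}-\varepsilon,\mu_{n/4+1}+\varepsilon]$ with probability at least $1-\delta_1-\delta_2=0.9$, and then argue that the median $\widehat v$ of $\ell=\lceil 14\log(1/\delta)\rceil$ independent copies is in the interval with probability at least $1-\delta$. For a fixed $i$ there are two independent sources of randomness: the uniform subsample $\calB_i$ of $k$ arms, and the sub-Gaussian noise in the empirical means $\widehat\mu_j$. I would control each by one event. The \emph{accuracy} event $E_{\mathrm{acc}}$ is that every $j\in\calB_i$ satisfies $|\widehat\mu_j-\mu_j|\le\varepsilon$; since each sampled arm is pulled $\lceil 2\log(2k/\delta_1)/\varepsilon^2\rceil$ times, a sub-Gaussian tail bound gives per-arm failure probability at most $\delta_1/k$, so a union bound over the $k$ draws yields $\bbP(E_{\mathrm{acc}})\ge 1-\delta_1$.

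The second event concerns the composition of the subsample. Call an arm \emph{high} if $\mu_j>\mu_{n/4+1}$ and \emph{safe} if $\mu_j\ge\mu_{n/2}$; by the assumed ordering of the means there are at most $n/4$ high arms and at least $n/2$ safe arms. Since $\calB_i$ is an i.i.d.\ uniform sample of size $k$, the number of high draws $N_H$ and of safe draws $N_L$ are binomial with $\mathbb{E}[N_H]\le k/4$ and $\mathbb{E}[N_L]\ge k/2$. On $E_{\mathrm{acc}}$, any empirical mean exceeding $\mu_{n/4+1}+\varepsilon$ must come from a high arm and every safe arm has empirical mean at least $\mu_{n/2}-\varepsilon$; hence, if $N_H\le k/3$ and $N_L\ge k/3+1=z$, then the $z$-th largest empirical mean $v_i$ lies in $[\mu_{n/2}-\varepsilon,\mu_{n/4+1}+\varepsilon]$. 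A Hoeffding bound gives $\bbP(N_H\ge z)\le \exp(-k/72)$ and $\bbP(N_L\le k/3)\le \exp(-k/18)$, and the choice $k=\lceil 108\log(4/\delta_2)\rceil$ makes each at most $\delta_2/2$. Combining with $E_{\mathrm{acc}}$ shows $v_i$ is good with probability at least $1-\delta_1-\delta_2=0.9$.

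For the final step I would run the standard median-boosting argument. If strictly more than $\ell/2$ of the values $v_1,\dots,v_\ell$ lie in $[\mu_{n/2}-\varepsilon,\mu_{n/4+1}+\varepsilon]$, then fewer than $\ell/2$ fall below its left endpoint and fewer than $\ell/2$ above its right endpoint, so the median $\widehat v$ is forced into the interval as well. Thus the output is bad only if at least half of the $\ell$ independent trials are bad; since each is bad with probability at most $0.1$, Hoeffding's inequality bounds this failure probability by $\exp(-2\ell(1/2-1/10)^2)=\exp(-8\ell/25)$, and $\ell=\lceil 14\log(1/\delta)\rceil$ drives this below $\delta$.

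I expect the single-iteration analysis to be the main obstacle. One must (i) translate $E_{\mathrm{acc}}$ cleanly into statements about which empirical means can cross the two thresholds $\mu_{n/4+1}+\varepsilon$ and $\mu_{n/2}-\varepsilon$ and handle both sides of the order statistic simultaneously; (ii) account for ties at the boundary ranks $n/4+1$ and $n/2$ so that the high/safe counts are genuinely bounded by $n/4$ above and $n/2$ below; and (iii) check that the gap between the binomial means ($k/4$ and $k/2$) and the common threshold $z=k/3+1$ is wide enough for the constant $108$ in $k$ to push both Hoeffding bounds under $\delta_2/2$. By comparison, the median-boosting step is routine.
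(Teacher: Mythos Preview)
Your proposal is correct and follows essentially the same approach as the paper: establish an accuracy event on the empirical means, a quantile-count event on the subsample (the paper tracks the first- and second-quartile counts $n_{1,i},n_{2,i}$ separately via multiplicative Chernoff bounds, whereas you use the slightly cleaner high/safe counts $N_H,N_L$ with additive Hoeffding), conclude the single-iteration guarantee with probability $0.9$, and then median-boost. The only cosmetic differences are the choice of concentration inequality and your value-based (rather than rank-based) quantile definitions, which, as you note, handle ties more gracefully.
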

\begin{proof}
Consider an iteration $i$. Let $n_{1,i}$ and $n_{2,i}$ be the number of times arms $1$ through $n/4$ and arms $n/4+1$ through $n/2$ were drawn for $\calB_i$, respectively; that is, the counts of occurrences corresponding to the first and second quantiles of highest arm means.
We then say the event $E_i$ holds if 
\begin{itemize}[topsep=-1pt]
    \item $\frac{7k}{40}\leq n_{1,i}\leq \frac{k}{3}$ and $\frac{7k}{40}\leq n_{2,i} \leq \frac{k}{3}$, and 
    \item for all $j\in\calB_i$, $|\widehat \mu_j-\mu_j|\leq \varepsilon$.
\end{itemize}

By a standard sub-Gaussian tail bound and union bound, we have that $|\widehat \mu_j-\mu_j|\leq \varepsilon$ simultaneously for all $j\in \calB_i$ with probability at least $1-\delta_1$. This implies that any sampled arm not in the first quantile has an empirical mean of at most $\mu_{n/4+1}+\varepsilon$. Similarly any sampled arm from the first and second quantile has an empirical mean of at least $\mu_{n/2}-\varepsilon$. 
Next, observe that $\bbE[n_{1,i}]=\bbE[n_{2,i}]=k/4$.
Using the Chernoff bound and union bound, we have $\frac{7k}{40}\leq|n_{1,i}|\leq \frac{k}{3}$ and $\frac{7k}{40}\leq|n_{2,i}|\leq \frac{k}{3}$ with probability at least $1-\delta_2$.
Thus, by a union bound, $\bbP[E_i]\geq 1-\delta_1-\delta_2=\frac{9}{10}$.

We first claim that if $E_i$ holds, then $v_i\in [\mu_{n/2}-\varepsilon,\mu_{n/4+1}+\varepsilon]$.
We see that the $z$-th highest empirical mean is in the interval $[\mu_{n/2}-\varepsilon,\mu_{n/4+1}+\varepsilon]$ if there are \emph{at most} $z-1$ empirical means greater than $\mu_{n/4+1}+\varepsilon$ and there are \emph{at least} $z$ empirical means greater than or equal to $\mu_{n/2}-\varepsilon$.
Noting that $z=\frac{k}{3}+1$, on $E_i$, the first condition occurs as $n_{1,i}\leq k/3 = z-1$, and the second condition occurs as $n_{1,i} + n_{2,i} \geq \frac{7k}{20} > z$.

Finally, let $X_i$ be the indicator variable $\mathbbm{1}_{E_i}$, and let $Z=\sum_{i=1}^\ell X_i$ be the total number of times $E_i$ occurs.
If $Z>\ell/2$, then the median of $\{v_i:i\in[\ell]\}$ lies in $[\mu_{n/2}-\varepsilon,\mu_{n/4+1}+\varepsilon]$.
Observing that $\bbE[Z]\geq \frac{9\ell}{10}$, where $\ell=\lceil14\log(\frac{1}{\delta})\rceil$, by the Chernoff bound we have $\bbP[Z\geq 0.54\ell]\geq 1-\delta$.
Hence, \cmidval/($\calA,\varepsilon,\delta$) returns a value $\widehat v \in [\mu_{n/2}-\varepsilon,\mu_{n/4+1}+\varepsilon]$ with probability at least $1-\delta$.
\end{proof}

\subsection{Algorithm without the knowledge of $\Delta_g$}\label{sec:nogap}
First we describe the guarantees of a procedure based on stochastic bandit algorithms that is used to verify whether an entry $(\widehat i,\widehat j)$ is a PSNE of $A$ or not. 

\begin{lemma}\label{verify-guarantees}
    Given an entry $(\widehat i,\widehat j)$ and parameters $\Delta$ and $\delta$, there is a procedure \textsc{Verify} possessing the following properties with probability $1-\delta$:
\begin{itemize}[itemsep=0pt,topsep=-2pt]
    \item If $\Delta \leq \Delta_g$ and $(\widehat i,\widehat j)=(\istar,\jstar)$, then $(\widehat i,\widehat j)$ is ``accepted'' as the PSNE of $A$ by \textsc{Verify}.
    \item If $(\widehat i,\widehat j)\neq (\istar,\jstar)$, then $(\widehat i,\widehat j)$ is ``not accepted'' as the PSNE of $A$ by \textsc{Verify}.
\end{itemize}
Moreover, \textsc{Verify} terminates after taking $c\cdot \mathbf{H_\star}\cdot \log(\frac{\log(\mathbf{H_\star})}{\delta})$ samples from $A$ where $c$ is an absolute constant and $\mathbf{H_\star}=\frac{n+m-2}{\Delta^2}$.
\end{lemma}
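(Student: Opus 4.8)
The plan is to reduce the problem of verifying that $(\widehat i,\widehat j)$ is the PSNE to two independent best-arm identification problems and to run a near-optimal best-arm subroutine on each, subject to a global sampling budget. Recall that $(\widehat i,\widehat j)$ is a PSNE exactly when $A_{\widehat i,\widehat j}=\max_i A_{i,\widehat j}$ (it is the largest entry of column $\widehat j$) and $A_{\widehat i,\widehat j}=\min_j A_{\widehat i,j}$ (it is the smallest entry of row $\widehat i$). Under the standing strict-inequality assumption $A_{i,\jstar}<A_{\istar,\jstar}<A_{\istar,j}$, the entry $(\istar,\jstar)$ is the unique PSNE, so any $(\widehat i,\widehat j)\neq(\istar,\jstar)$ must violate at least one of these two conditions.

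Concretely, \textsc{Verify} would run, with confidence $\delta/2$ each, a near-optimal best-arm identification algorithm with a $\log\log$ sample-complexity guarantee (e.g. lil'UCB of \citet{jamieson2014lil} or exponential-gap elimination of \citet{karnin2013almost}): one instance on the $n$ arms of column $\widehat j$ to certify that $\widehat i$ has the largest mean, and a second instance on the $m$ arms of row $\widehat i$ (with signs negated) to certify that $\widehat j$ has the smallest mean. The whole procedure is capped at a global budget of $c\cdot\mathbf{H_\star}\cdot\log(\log(\mathbf{H_\star})/\delta)$ pulls; it ``accepts'' $(\widehat i,\widehat j)$ iff both subroutines certify their candidate before the budget is exhausted, and ``does not accept'' otherwise.

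For soundness, I would condition on the event that all confidence sequences used by the two subroutines are simultaneously valid, which holds with probability at least $1-\delta$. On this event a best-arm subroutine never certifies a candidate that is not the true optimum; hence if $(\widehat i,\widehat j)\neq(\istar,\jstar)$, the violated column-max or row-min condition prevents the corresponding subroutine from certifying, so $(\widehat i,\widehat j)$ is not accepted, regardless of when the budget is hit. For completeness and the sample bound, I would invoke the instance-dependent guarantee of the subroutine: when $(\widehat i,\widehat j)=(\istar,\jstar)$, the column instance certifies after $O\big(\sum_{i\neq\istar}\Delta_{i,\jstar}^{-2}\log(\log(\Delta_{i,\jstar}^{-2})/\delta)\big)$ pulls and the row instance after $O\big(\sum_{j\neq\jstar}\Delta_{\istar,j}^{-2}\log(\log(\Delta_{\istar,j}^{-2})/\delta)\big)$, both on the same good event. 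Since $\Delta\le\Delta_g$ gives $\Delta_{i,\jstar}^{-2}\le\mathbf{H_1}=(n+m-2)\Delta_g^{-2}\le\mathbf{H_\star}$ (and likewise for the row gaps), every per-arm $\log\log$ factor is at most $\log(\log(\mathbf{H_\star})/\delta)$, so the two sums collapse to at most $\mathbf{H_1}\cdot\log(\log(\mathbf{H_\star})/\delta)\le\mathbf{H_\star}\cdot\log(\log(\mathbf{H_\star})/\delta)$, which is within budget; hence both subroutines finish and $(\widehat i,\widehat j)$ is accepted. The budget cap guarantees termination within $c\cdot\mathbf{H_\star}\cdot\log(\log(\mathbf{H_\star})/\delta)$ pulls in every case.

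The hard part will be matching the clean $(n+m-2)/\Delta^2$ form of $\mathbf{H_\star}$: a fixed, uniform allocation of $\Delta^{-2}$ pulls per arm is insufficient, since individual gaps $\Delta_{i,\jstar}$ may be much smaller than $\Delta$, so adaptivity is essential in order to ``borrow'' budget from large-gap arms for the small-gap ones. The bridge is the identity $\mathbf{H_1}=(n+m-2)\Delta_g^{-2}$ together with $\Delta\le\Delta_g$, which lets one bound the adaptive instance-dependent complexity $\sum_{i\neq\istar}\Delta_{i,\jstar}^{-2}+\sum_{j\neq\jstar}\Delta_{\istar,j}^{-2}$ by $\mathbf{H_\star}$. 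Care is also needed so that the budget cap interacts correctly with soundness: it may only ever suppress a certification, never manufacture a false one.
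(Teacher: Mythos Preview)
Your proposal is correct and follows essentially the same approach as the paper: reduce verification to two best-arm identification instances (column $\widehat j$ and row $\widehat i$ with negated rewards), run a near-optimal subroutine such as exponential-gap elimination with confidence $\delta/2$ on each under a global budget of order $\mathbf{H_\star}\log(\log(\mathbf{H_\star})/\delta)$, and accept iff both certify their candidate within budget. The paper isolates the key combinatorial fact you state informally---that $(\widehat i,\widehat j)\neq(\istar,\jstar)$ forces either $A_{\widehat i,\widehat j}<A_{\istar,\widehat j}$ or $A_{\widehat i,\widehat j}>A_{\widehat i,\jstar}$---as a separate proposition, and uses the slightly coarser subroutine bound $c_1 H_0\log(\log H_0/\delta)$ rather than your per-arm $\log\log$ sum, but the logic and the crucial bridge $\mathbf{H_1}=(n+m-2)\Delta_g^{-2}\le\mathbf{H_\star}$ are identical.
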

We refer the reader to \Cref{appendix:verify} for more details.

\Cref{alg-meta} is a meta-algorithm for finding $(\istar,\jstar)$ with high probability without the knowledge of $\Delta_g$. \Cref{theorem-main} establishes the guarantees for \Cref{alg-meta}.
\begin{algorithm}
\caption{}
\begin{algorithmic}[1]
\STATE \textbf{Given:} Probability error term $\delta$.
\FOR{$t=1,2,\ldots$}
\STATE $\Delta_t\gets \frac{1}{2^{t-1}}$ and $\delta_t\gets \frac{\delta}{4t^2}$
\STATE $(i_t,j_t)\gets$\infalg/($A,\Delta_t,\delta_t$)
\IF{\textsc{Verify}($A,i_t,j_t,\delta_t,\Delta_t$)=``accepted''}
\STATE \textbf{Return} $(i_t,j_t)$ as the PSNE
\ENDIF
\ENDFOR
\end{algorithmic}
\label{alg-meta}
\end{algorithm}

\begin{proof}[Proof of \Cref{theorem-main}]
    Recall that $\Delta_g=(\frac{n+m-2}{\mathbf{H_1}})^{1/2}$. Consider an iteration $t\leq \lceil \log_2(1/\Delta_g) \rceil+1$. If $(i_t,j_t)\neq (\istar,\jstar)$ then, due to \Cref{verify-guarantees}, $(i_t,j_t)$ is not returned as the PSNE with probability at least $1-\frac{\delta}{4t^2}$.  Consider the iteration $t_\star=\lceil \log_2(1/\Delta_g) \rceil+2$. Observe that $\Delta_{t_\star}\leq \frac{1}{2^{\log_2(1/\Delta_g)}}=\Delta_g$. Due to \Cref{lem:psne-correct}, with probability at least $1-\frac{\delta}{4t_\star^2}$, we have $(i_{t_\star},j_{t_\star})=(\istar,\jstar)$. Due to \Cref{verify-guarantees}, the pair $(\istar,\jstar)$ is accepted and returned as a PSNE with probability at least $1-\frac{\delta}{4t_\star^2}$. 

    Hence, with probability at least $1-\frac{\delta}{4t_\star^2}-\sum_{t=1}^\infty\frac{\delta}{4t^2}>1-\delta$ the algorithm terminates by the iteration $t_\star=\lceil \log_2(1/\Delta_g) \rceil+2$ and returns $(\istar,\jstar)$ as the PSNE.

    Let us assume that the algorithm terminates by the iteration $t_\star=\lceil \log_2(1/\Delta_g) \rceil+2$. Due to \Cref{lem:psne-sample} and \Cref{verify-guarantees}, the number of samples in any iteration $t$ is upper bounded by $c\cdot \frac{n+m-2}{\Delta_t^2}\cdot \log\left(\frac{nm\log(\frac{1}{\Delta_t})}{\delta_t}\right)\cdot \log (nm)$ where $c$ is an absolute constant. The sample complexity $\tau$ of \Cref{alg-meta} is upper bounded as follows:
        \begin{align*}
        \tau &\leq \sum_{t=1}^{t_\star} c\cdot \frac{n+m-2}{\Delta_t^2}\cdot \log\left(\frac{nm\log(\frac{1}{\Delta_t})}{\delta_t}\right)\cdot\log (nm),\\
        &\leq \sum_{t=1}^{t_\star} c\cdot 4^t(n+m-2)\cdot \log\left(\frac{4t^3nm}{\delta}\right)\cdot\log (nm),\\
         &\leq c\cdot (n+m-2)\cdot \log\left(\frac{4t_\star^3\cdot nm}{\delta}\right)\cdot\log (nm)\cdot \sum_{t=1}^{t_\star}4^t,
    \end{align*}
    where the second inequality holds since $\Delta_t=\frac{1}{2^{t-1}}$ and $\delta_t=\frac{\delta}{4t^2}$.
    Using the fact that $\sum_{t=1}^{t_\star}4^t\leq \frac{4^{t_\star+1}}{3}$, we further deduce that
    \begin{align*}
       \tau &\leq \frac{4c}{3}\cdot (n+m-2)\cdot \log\left(\frac{4t_\star^3\cdot nm}{\delta}\right)\cdot\log (nm)\cdot 2^{2t_\star},\\
        &\leq \frac{256c}{3}\cdot \frac{n+m-2}{\Delta_g^2}\cdot \log\left(\frac{4t_\star^3\cdot nm}{\delta}\right)\cdot\log (nm),
    \end{align*}
    where the last inequality is due to $t_\star\leq \log_2(1/\Delta_g)+3$.
    The claim in the theorem follows from the fact that $\Delta_g=(\frac{n+m-2}{\mathbf{H_1}})^{1/2}$. 
\end{proof}


\subsection{Dueling Bandits}\label{sec:dueling}
Recall that in the context of dueling bandits defined by the matrix $\mathbf{P}$, we have $\Delta_{i,j}=|1/2-\mathbf{P}_{i,j}|$. The lower bound to identify the Condorcet winner $\istar$, whenever it exists, scales roughly as $\sum_{i\neq \istar}\Delta_{i,\istar}^{-2}$. Now, an instance of dueling can be interpreted as a two-player zero-game on a matrix $A$ where $A=\mathbf{P}$. It is straightforward to see  that the PSNE of $A$ is $(\istar,\istar)$. \Cref{alg-meta} achieves the bound of  $\sum_{i\neq \istar}\Delta_{i,\istar}^{-2}$, up to log factors.
\section{EXPERIMENTS}\label{sec:experiments}
To illustrate the performance of \infalg/, we introduce the following class of $d\times d$ reward matrices, where we use bold to denote a vector of repeated entries:
\begin{align*}
A_{\texttt{hard}}&(\Delta_{\min}, \beta):= \\
&\left[
\begin{BMAT}(@){c:c}{c:c}
    0.5 & \begin{BMAT}(@)[3pt]{c:c}{c} 0.5 + \Delta_{\min} & (\bm{0.5 + \beta})^\top \end{BMAT} \\
    \begin{BMAT}(@)[3pt]{c}{c:c} 0.5 - \Delta_{\min} \\ \bm{0.5 - \beta}\end{BMAT} & 0.5I + U
\end{BMAT}
\right],
\end{align*}
where $U$ is the matrix with all ones on the (strictly) upper triangular entries and  $0<\Delta_{\min} < \beta$.
These instances are difficult for the class of learners that play two no-regret learners against themselves, as the second row and second column have high and low sums, respectively, causing $A_{2,2}$ to potentially appear optimal.
Note, this is also a dueling bandit instance, which is a special case of a matrix game.

As baselines, we compare \infalg/\footnote{The subroutines \cmidval/ and \rmidval/ are wasteful. We optimize them for empirical performance by using an aggressive procedure similar to successive halving.
Furthermore, a fixed budget of $T$ is achieved for \infalg/ by setting $\Delta\approx \left(\frac{n+m}{T}\right)^{1/2}$ and $\delta\approx\text{constant}$.} to \tsinf/ \citep{zimmert2021tsallis}, \expix/ \citep{neu2015explore}, \lucbg/ \citep{zhou2017identify}, and uniform random sampling for different matrices in $A_{\texttt{hard}}$.
Each algorithm is given a fixed budget $T$ of samples that it can draw from the input matrix and we measure its percentage of identifying the PSNE for the given value of $T$.
Code to replicate these experiments is available at \url{https://github.com/aistats2024-noisy-psne/midsearch}; supplementary experimental results can be found in \Cref{appendix:plot}.

\begin{figure*}[h!]
  \centering
  \includegraphics[width=\textwidth]{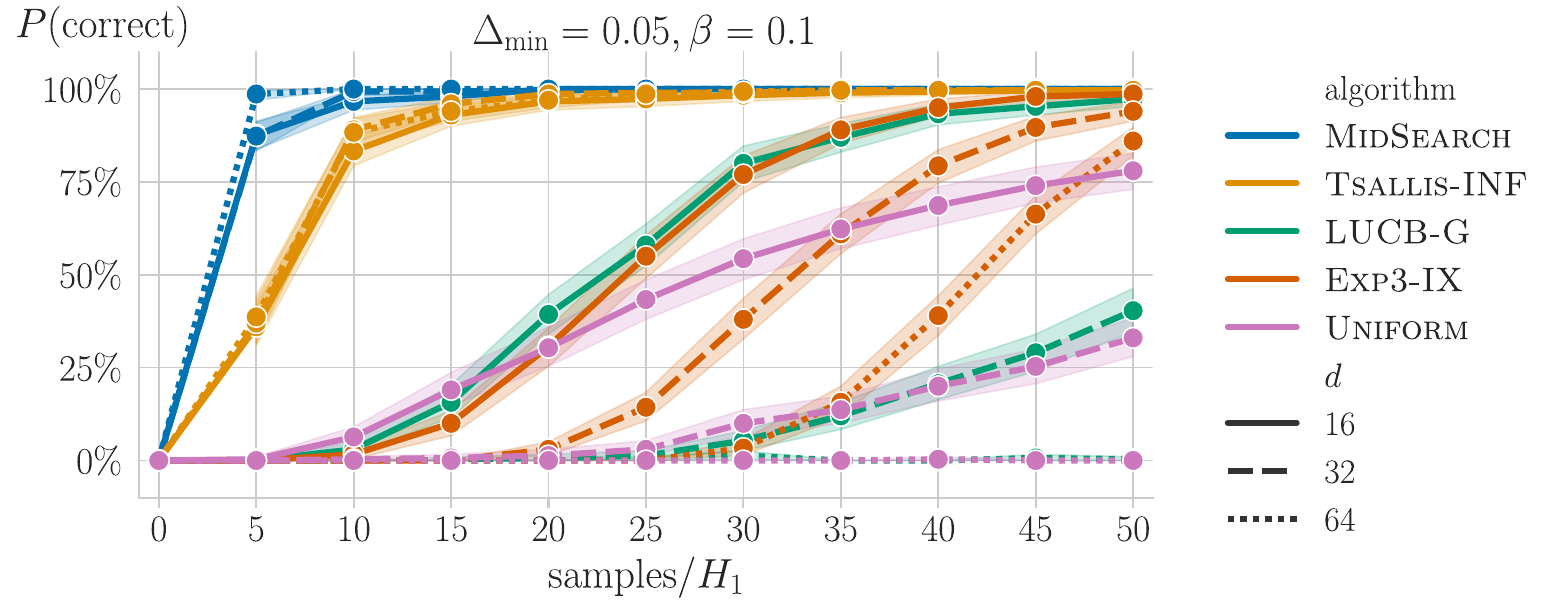}
  \caption{Plot for each algorithm on the $A_{\texttt{hard}}(0.05,0.1)$ instance with $N=300$ trials and with a budget of $50H_1$ samples; the $y$-axis shows what percentage of the time each algorithm's output was correct at different checkpoints during each run. We also overlay the corresponding Wilson binomial confidence interval ($p=95\%$).}
  \label{fig:exp_1}
\end{figure*}

\begin{figure*}[h!]
  \includegraphics[width=\textwidth]{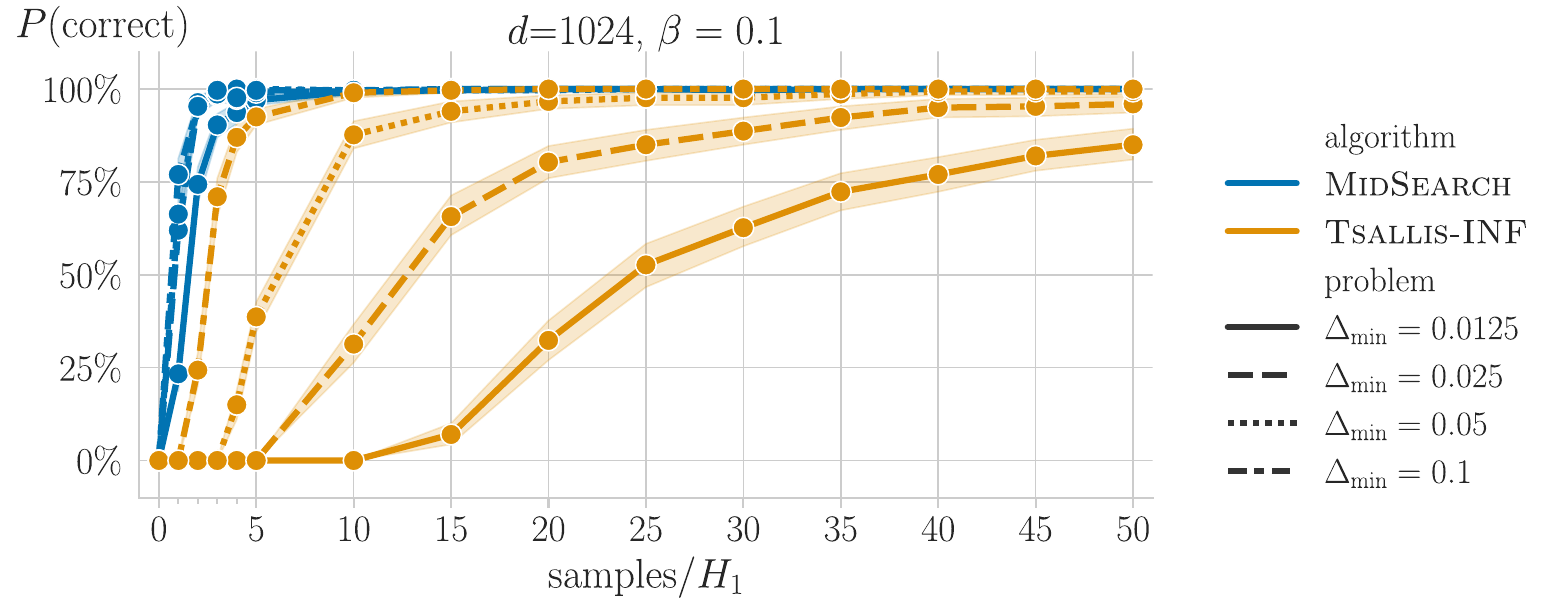}
  \caption{Plot of \infalg/ vs. \tsinf/ for varying $\Delta_{\min}$ parameter. Each algorithm was run for $N=300$ trials on the $A_{\texttt{hard}}(\Delta_{\min},0.1)$ instance ($d=1024$) with a budget of $50H_1$ samples.}
  \label{fig:exp_3}
\end{figure*}

Let $H_1=\frac{d-2}{\beta^2}+\frac{1}{\Delta_{\min}^2}$.
\Cref{fig:exp_1} shows that even with a budget of $50H_1$ samples, the \expix/, \lucbg/\footnote{For \lucbg/, we set $\delta\approx\text{constant}$ and return the PSNE it identifies by using at most $T$ samples.
We take the PSNE of the empirical mean matrix (if it exists) if the budget has been reached.},  and \textsc{Uniform} algorithms already begin to degrade in performance at $d=32$, returning the correct PSNE on $282$, $121$, and $99$ of the $300$ trials, respectively (\expix/ degrades slightly slower, worsening to 150/300 at $d=128$).

Thus, for larger matrices, we only compare \infalg/ and \tsinf/.
In \Cref{fig:exp_3} we plot the performance of \infalg/ vs. \tsinf/ for instances of $A_{\texttt{hard}}(\Delta_{\min}, 0.1)$ with $d=1024$, varying only $\Delta_{\min}$. We see that after $T=50H_1$ samples, \infalg/ remains close to perfect for the $d=1024$ matrix, while the accuracy of \tsinf/ decreases as $\Delta_{\min}$ decreases from $0.1$ to $0.0125$.

Note that the sample complexity for \textsc{Uniform} is $d^2 \Delta_{\min}^{-2}$, \lucbg/ is no worse than $d \Delta_{\min}^{-2} + d^2$, \expix/ is no worse than $d \Delta_{\min}^{-2}$, \tsinf/ is no worse than $d \Delta_{\min}^{-1} \beta^{-1} + \Delta_{\min}^{-2}$; \infalg/ is no worse than $\mathbf{H}_1 = d \beta^{-2} + \Delta_{\min}^{-2}$. Based on these experiments, we conjecture that these bounds are indeed tight.

\section{CONCLUSION}
In this paper we generalized stochastic bandits and dueling bandits by studying the problem of identifying the PSNE in two-player zero-sum games, whenever it exists. 
We designed an algorithm that matches the lower bound from \citet{zhou2017identify}, up to log factors. 
To the best of our knowledge, this is also the first algorithm that achieves near-optimal guarantees for identifying Condorcet winner in dueling bandits. 
Our work leads to the following interesting problems. First, can one design a $\delta$-PAC learner that has optimal bounds even in the log factors, similar to the extensive work in multi-armed bandits?
Although PSNE is not guaranteed to exist in every matrix, a mixed strategy Nash Equilibrium is always guaranteed to exist. The optimal sample complexity to identify an $\varepsilon$-Nash Equilibrium in an interesting open problem (see \citet{maiti2023instance} for more discussion). Last-iterate convergence in matrix games has also gained significant interest in recent years (see \citet{weilinear}). An important open problem here is to study last-iterate convergence in matrix games with noisy bandit feedback. Finally, can one design an algorithm for dueling bandits that simultaneously has optimal sample complexity for identifying the Condorcet winner, when it exists, and optimal regret guarantees with respect to the Condorcet winner (see \citet{saha2022versatile} for the definition and near-optimal result for regret), up to log factors.


\section*{ACKNOWLEDGEMENTS}
This work was supported in part by NSF RI Award \# 1907907 and a Northrop Grumman University Research Award.
\bibliographystyle{plainnat}
\bibliography{refs.bib}
\clearpage
 
\appendix
\section{Technical Lemmas}
\begin{lemma}[sub-Gaussian tail bound]
Let $X_1,X_2,\ldots,X_n$ be i.i.d samples  from a $1$-sub-Gaussian distribution with mean $\mu$. Then we have the following for any $0<\delta<1$:
\begin{equation*}
    \mathbb{P}\left[\left|\frac{1}{n}\cdot\sum_{i=1}^nX_i-\mu\right|\geq \sqrt{\frac{2\log(2/\delta)}{n}}\right]\leq \delta
\end{equation*}
\end{lemma}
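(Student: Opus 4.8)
The plan is to use the standard Chernoff (exponential moment) method together with the defining property of sub-Gaussianity. Recall that $X$ having mean $\mu$ and being $1$-sub-Gaussian means $\mathbb{E}[e^{\lambda(X-\mu)}]\leq e^{\lambda^2/2}$ for every $\lambda\in\mathbb{R}$. Writing $\bar X=\frac{1}{n}\sum_{i=1}^n X_i$ and fixing $t=\sqrt{\frac{2\log(2/\delta)}{n}}$, I would first bound the upper tail $\mathbb{P}(\bar X-\mu\geq t)$, then handle the lower tail by symmetry, and finish with a union bound over the two tails.

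For the upper tail, the key step is to fix $\lambda>0$ and apply Markov's inequality to the exponentiated sum: $\mathbb{P}(\bar X-\mu\geq t)=\mathbb{P}\big(\sum_{i=1}^n(X_i-\mu)\geq nt\big)\leq e^{-\lambda nt}\,\mathbb{E}\big[e^{\lambda\sum_i(X_i-\mu)}\big]$. Using independence to factor the expectation and then the sub-Gaussian bound on each factor gives $\mathbb{E}[e^{\lambda\sum_i(X_i-\mu)}]=\prod_{i=1}^n\mathbb{E}[e^{\lambda(X_i-\mu)}]\leq e^{n\lambda^2/2}$, so that $\mathbb{P}(\bar X-\mu\geq t)\leq e^{-\lambda nt+n\lambda^2/2}$.

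The next step is to optimize the free parameter. Minimizing the exponent $-\lambda nt+n\lambda^2/2$ over $\lambda$ yields the minimizer $\lambda=t>0$ and the value $-nt^2/2$, giving $\mathbb{P}(\bar X-\mu\geq t)\leq e^{-nt^2/2}$. An identical argument applied to $-(X_i-\mu)$, which is also $1$-sub-Gaussian, yields the same bound for the lower tail $\mathbb{P}(\bar X-\mu\leq -t)$, and a union bound then gives $\mathbb{P}(|\bar X-\mu|\geq t)\leq 2e^{-nt^2/2}$. Substituting $t=\sqrt{\frac{2\log(2/\delta)}{n}}$ makes $nt^2/2=\log(2/\delta)$, so the right-hand side equals $2\cdot\frac{\delta}{2}=\delta$, which is exactly the claim.

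Since this is a textbook concentration inequality, I do not expect a genuine obstacle. The only points requiring a small amount of care are confirming that the defining sub-Gaussian inequality is assumed for \emph{all} real $\lambda$ (so that the symmetric lower-tail argument is legitimate) and checking that the minimizing $\lambda=t$ is indeed positive, which holds for any $0<\delta<1$.
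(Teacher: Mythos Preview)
Your proof is correct and is the standard Chernoff-bound argument for sub-Gaussian concentration. The paper itself states this lemma without proof (as a well-known technical fact), so there is nothing further to compare.
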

\begin{lemma}[Chernoff Bound]
    Let $X_1,X_2,\ldots,X_n$ be i.i.d samples  from a Bernoulli distribution with mean $\mu$. Then we have the following for any $0<\delta<1$:
\begin{equation*}
    \mathbb{P}\left[\frac{1}{n}\cdot\sum_{i=1}^nX_i\geq (1+\delta) \mu\right]\leq  e^{-\frac{n\mu\delta^2}{3}} \quad\text{and}\quad \mathbb{P}\left[\frac{1}{n}\cdot\sum_{i=1}^nX_i\leq (1-\delta) \mu\right]\leq  e^{-\frac{n\mu\delta^2}{2}} 
\end{equation*}
\end{lemma}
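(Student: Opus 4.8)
The plan is to prove both tail bounds via the standard Cram\'er--Chernoff (exponential moment) method, treating the two directions separately but symmetrically. Write $S=\sum_{i=1}^n X_i$, so that $\mathbb{E}[S]=n\mu$. For the upper tail, fix any $s>0$ and apply Markov's inequality to $e^{sS}$ to get $\mathbb{P}[S\ge (1+\delta)n\mu]\le e^{-s(1+\delta)n\mu}\,\mathbb{E}[e^{sS}]$. Since the $X_i$ are i.i.d.\ Bernoulli$(\mu)$, independence gives $\mathbb{E}[e^{sS}]=\big(1+\mu(e^s-1)\big)^n$, and the elementary inequality $1+x\le e^x$ yields $\mathbb{E}[e^{sS}]\le e^{n\mu(e^s-1)}$. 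Combining, $\mathbb{P}[S\ge(1+\delta)n\mu]\le \exp\!\big(n\mu(e^s-1-s(1+\delta))\big)$.

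Next I would optimize the exponent over $s$. Setting the derivative $e^s-(1+\delta)$ to zero gives the minimizer $s=\ln(1+\delta)>0$, and substituting back produces the clean bound $\mathbb{P}[S\ge(1+\delta)n\mu]\le \exp\!\big(-n\mu(\,(1+\delta)\ln(1+\delta)-\delta\,)\big)$. The lower tail is handled identically after replacing $s$ by $-s$: Markov applied to $e^{-sS}$ gives $\mathbb{P}[S\le(1-\delta)n\mu]\le \exp\!\big(n\mu(e^{-s}-1+s(1-\delta))\big)$, the optimal choice is $s=-\ln(1-\delta)$, and one obtains $\mathbb{P}[S\le(1-\delta)n\mu]\le \exp\!\big(-n\mu(\,(1-\delta)\ln(1-\delta)+\delta\,)\big)$.

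What remains is to reduce these exponents to the advertised forms, i.e.\ to establish the two scalar inequalities $(1+\delta)\ln(1+\delta)-\delta\ge \delta^2/3$ and $(1-\delta)\ln(1-\delta)+\delta\ge \delta^2/2$ for all $0<\delta<1$. The lower-tail inequality is the easy one: expanding $\ln(1-\delta)=-\sum_{k\ge1}\delta^k/k$ and multiplying by $(1-\delta)$ telescopes to $(1-\delta)\ln(1-\delta)=-\delta+\sum_{k\ge2}\frac{\delta^k}{k(k-1)}$, whose $k=2$ term is exactly $\delta^2/2$ and whose remaining terms are all nonnegative, so the bound follows immediately with room to spare.

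The main obstacle is the upper-tail scalar inequality, because the analogous expansion $(1+\delta)\ln(1+\delta)=\delta+\sum_{k\ge2}(-1)^k\frac{\delta^k}{k(k-1)}$ is alternating, so one cannot simply drop tail terms. I would instead argue through an auxiliary function $h(\delta):=(1+\delta)\ln(1+\delta)-\delta-\delta^2/3$: here $h(0)=0$, $h'(\delta)=\ln(1+\delta)-\tfrac{2}{3}\delta$ with $h'(0)=0$, and $h''(\delta)=\frac{1-2\delta}{3(1+\delta)}$, which is positive on $(0,1/2)$ and negative on $(1/2,1)$. Thus $h'$ rises from $0$ to a single maximum and then falls; since $h'(0)=0$ and $h'(1)=\ln 2-2/3>0$, unimodality forces $h'\ge 0$ throughout $[0,1]$, so $h$ is nondecreasing and $h\ge h(0)=0$, which is exactly the required inequality. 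Plugging these two scalar bounds into the optimized exponents above gives $e^{-n\mu\delta^2/3}$ and $e^{-n\mu\delta^2/2}$ respectively, completing the proof.
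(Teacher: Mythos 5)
Your proof is correct. Note that the paper states this Chernoff bound as a standard technical lemma in its appendix \emph{without} proof, so there is no in-paper argument to compare against; your derivation is the textbook Cram\'er--Chernoff route and it holds up under checking. In particular, the optimized exponents $\exp\bigl(-n\mu\,[(1+\delta)\ln(1+\delta)-\delta]\bigr)$ and $\exp\bigl(-n\mu\,[(1-\delta)\ln(1-\delta)+\delta]\bigr)$ are exactly right, the telescoping identity $(1-\delta)\ln(1-\delta)=-\delta+\sum_{k\ge 2}\frac{\delta^k}{k(k-1)}$ correctly isolates the $\delta^2/2$ term with a nonnegative remainder, and your handling of the harder upper-tail inequality is sound: $h''(\delta)=\frac{1-2\delta}{3(1+\delta)}$ is the correct second derivative, so $h'$ is unimodal on $[0,1]$ with $h'(0)=0$ and $h'(1)=\ln 2-\tfrac{2}{3}>0$, forcing $h'\ge 0$ and hence $h\ge 0$ on the whole interval --- this correctly sidesteps the alternating-series issue you identified. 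The only cosmetic caveat is that your argument establishes the upper-tail constant $1/3$ only for $\delta\in(0,1]$, which matches the lemma's stated range $0<\delta<1$, so nothing is missing.
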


\section{Stochastic Multi-Armed Bandit}\label{appendix:MAB}
In stochastic Multi-Armed Bandit (MAB), there are $n$ arms where each arm $i$ is associated with a sub-distribution $\calD_i$ whose mean is $\mu_i$. When an arm $i$ is pulled (or sampled) by a learner, it observes the random variable $X_i\sim \calD_i$. The arms with the highest mean is referred as the best-arm. In pure exploration, one aims to design a learner that aims to find the best arm with probability at least $1-\delta$ by pulling as few arms as possible.

Let $\istar=\max_{i}\mu_i$ and $\Delta_i=\mu_{\istar}-\mu_i$. Optimal bounds to identify the best-arm scales roughly as $\sum_{i\neq \istar}\frac{1}{\Delta_i^2}\cdot \log(1/\delta)$. Now, an instance of MAB can be interpreted as a two-player zero-game on a matrix $A\in\mathbb{R}^{n\times 1}$ where $A_{i,1}=\mu_i$. It is easy to observe that the PSNE of $A$ is $(\istar,1)$. It is easy to observe that our algorithm essentially becomes a variant of Sequential Halving for this special case of MAB and achieves the bound of  $\sum_{i\neq \istar}\frac{1}{\Delta_i^2}\cdot \log(1/\delta)$, up to log factors.

\section{\rmidval/ Subroutine }\label{appendix:rmidval}

Here, we define \rmidval/ which solves the subproblem of finding a value near the median of means of $n$ arms.

Consider $\varepsilon,\delta>0$.
Given a set of $n$ arms $\calA$ with means $\mu_1\leq \mu_2\leq \ldots \leq \mu_n$, \rmidval/ outputs a value $\widehat v \in [\mu_{n/4+1}-\varepsilon,\mu_{n/2}+\varepsilon]$ with probability $1-\delta$ by using at most $O(\frac{1}{\varepsilon^2}\log\left(\frac{1}{\delta}\right))$ samples. The proof of this claim follows from an analogous argument as that of \cmidval/. 

\begin{algorithm}[ht]
\caption{\rmidval/($\calA,\varepsilon,\delta$)}
\begin{algorithmic}[1]
\STATE $\ell\gets \lceil 14\log(\frac{1}{\delta})\rceil$, $\delta_1\gets 0.05$, $\delta_2\gets 0.05$, $k\gets \left\lceil 108\log(\frac{4}{\delta_2})\right\rceil$ and $z\gets \frac{k}{3}+1$.
\FOR{$i=1$ to $\ell$}
\STATE $\calB_i \gets \{k $ arms sampled independently and uniformly at random from $\calA \}$.
\STATE Sample each arm $j\in \calB_i$ for $\left\lceil\frac{2\log(\frac{2k}{\delta_1})}{\varepsilon^2}\right\rceil$ times and compute the empirical mean $\widehat \mu_j$.
\STATE $v_i\gets $ $z$-th lowest value in $\{\widehat \mu_j:j\in\calB_i\}$
\ENDFOR
\STATE \textbf{return} the median of $\{v_i:i\in[\ell]\}$
\end{algorithmic}
\label{alg-rowmidval}
\end{algorithm}

\section{Verification algorithm}\label{appendix:verify}

In this section, we focus on the following subproblem:

\textbf{Subproblem:} Given an entry $(\widehat i,\widehat j)$ and a parameter $\Delta$, do the following with probability $1-\delta$:
\begin{itemize}
    \item If $\Delta \leq \Delta_g$ and $(\widehat i,\widehat j)=(\istar,\jstar)$, then $(\widehat i,\widehat j)$ is ``accepted'' as the PSNE of $A$
    \item If $(\widehat i,\widehat j)\neq (\istar,\jstar)$, then $(\widehat i,\widehat j)$ is ``not accepted'' as the PSNE of $A$.
\end{itemize}

Consider a stochastic multi-armed bandit instance with means $\mu_1\geq \mu_2\geq \ldots \geq \mu_n$. Let $H_{0}=\sum_{i=2}^n\frac{1}{(\mu_1-\mu_i)^2}$.
Let $\texttt{Alg}$ be a best arm identification algorithm that does the following:
\begin{itemize}
    \item If the best arm is unique, $\texttt{Alg}$ finds the best arm by using at most $c_1\cdot H_{0}\cdot \log(\frac{\log H_0}{\delta})$ samples with probability at least $1-\delta/2$ where $c_1$ is an absolute constant. 
    \item If the best arm is not unique, $\texttt{Alg}$ does not terminate with probability at least $1-\delta/2$.
\end{itemize}
See \citet{karnin2013almost} for one such algorithm. We use $\texttt{Alg}$ to design a procedure called \textsc{Verify} that achieves the desired guarantees for the above the sub-problem. 

\begin{algorithm}[ht]
\caption{\textsc{Verify}($A,\widehat i,\widehat j,\delta,\Delta)$}
\begin{algorithmic}[1]
\STATE Let $\nu_1$ be a multi-armed bandit instance with means $\mu_i=A_{i,\widehat j}$ for all $i\in [n]$
\STATE Let $\nu_2$ be a multi-armed bandit instance with means $\mu_j=-A_{\widehat i,j}$ for all $j\in [m]$
\STATE $H_{\star}\gets \frac{n+m-2}{\Delta^2}$
\STATE Run \texttt{Alg} on the bandit instance $\nu_1$ and terminate it after it takes $c_1\cdot H_{\star}\cdot \log(\frac{\log H_\star}{\delta})$ samples.\label{st:verify-1}
\STATE If \texttt{Alg} terminates without returning arm $\widehat i$ as the best arm, then return ``not accepted''.
\STATE Run \texttt{Alg} on the bandit instance $\nu_2$ and terminate it after it takes $c_1\cdot H_{\star}\cdot \log(\frac{\log H_\star}{\delta})$ samples.\label{st:verify-2}
\STATE If \texttt{Alg} terminates without returning arm $\widehat j$ as the best arm, then return ``not accepted''.
\STATE Return ``accepted''.
\end{algorithmic}
\label{alg-verify}
\end{algorithm}

We first establish the sample complexity of \textsc{Verify} in the following lemma.
\begin{lemma}\label{verify-sample}
    \textsc{Verify} takes at most $2c_1\cdot H_{\star}\cdot \log(\frac{\log H_\star}{\delta})$ samples before terminating
\end{lemma}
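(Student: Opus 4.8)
The plan is to observe that the sample complexity bound is an immediate consequence of the explicit truncation built into \textsc{Verify}, so the proof reduces to a one-line accounting of where samples are drawn. The only lines of \Cref{alg-verify} that consume samples from $A$ are Line~\ref{st:verify-1} and Line~\ref{st:verify-2}, since constructing the bandit instances $\nu_1,\nu_2$ and comparing the returned arm against $\widehat i$ or $\widehat j$ are purely bookkeeping operations that draw nothing from the matrix.

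First I would note that in Line~\ref{st:verify-1} the subroutine \texttt{Alg} is run on $\nu_1$ but forcibly halted as soon as it has taken $c_1\cdot H_\star\cdot \log(\tfrac{\log H_\star}{\delta})$ samples; hence this phase contributes at most $c_1\cdot H_\star\cdot \log(\tfrac{\log H_\star}{\delta})$ samples regardless of whether \texttt{Alg} would otherwise continue. If the procedure returns ``not accepted'' at this point it terminates having drawn even fewer samples, so the bound certainly holds. Otherwise it proceeds to Line~\ref{st:verify-2}, which by the identical truncation argument contributes at most another $c_1\cdot H_\star\cdot \log(\tfrac{\log H_\star}{\delta})$ samples.

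Summing the two truncated phases gives a total of at most $2c_1\cdot H_\star\cdot \log(\tfrac{\log H_\star}{\delta})$ samples, which is exactly the claimed bound; the remaining branches of the algorithm only shorten this count. There is no genuine obstacle here: unlike the correctness claim in \Cref{verify-guarantees}, which relies on the probabilistic guarantees of \texttt{Alg} (the fact that with probability $1-\delta/2$ it either identifies a unique best arm within its budget or fails to terminate when the best arm is non-unique), the sample-complexity statement is deterministic and follows purely from the hard cap imposed in Lines~\ref{st:verify-1} and~\ref{st:verify-2}. Accordingly, I would present this as a short direct argument rather than invoking any property of \texttt{Alg} beyond its being run under an explicit sample budget.
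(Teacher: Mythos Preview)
Your proposal is correct and takes essentially the same approach as the paper: the paper's own proof is the single sentence ``The lemma trivially follows from lines~\ref{st:verify-1} and~\ref{st:verify-2} of \textsc{Verify},'' which is precisely the deterministic truncation-accounting you spell out in more detail.
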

\begin{proof}
    The lemma trivially follows from lines \ref{st:verify-1} and \ref{st:verify-2} of \textsc{Verify}.
\end{proof}

We prove the following proposition that is useful to establish the correctness of our algorithm
\begin{proposition}\label{prop-verify}
    If $(\widehat i,\widehat j)\neq (\istar,\jstar)$, then either $A_{\widehat i,\widehat j}<A_{\istar,\widehat j}$ or $A_{\widehat i,\widehat j}>A_{\widehat i,\jstar}$.
\end{proposition}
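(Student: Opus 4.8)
The plan is to prove the statement by contradiction, resting on a single ``cross inequality'' that holds for every off-equilibrium entry. The first step is to establish that for any $(\widehat i,\widehat j)\neq(\istar,\jstar)$ we always have
\[
A_{\widehat i,\jstar} < A_{\istar,\widehat j}.
\]
This follows from the strict saddle-point hypothesis $A_{i,\jstar}<A_{\istar,\jstar}<A_{\istar,j}$ for $(i,j)\neq(\istar,\jstar)$, combined with the defining equalities at any shared coordinate. I would split into three cases according to which of $\widehat i,\widehat j$ differs from the equilibrium index. If both differ, chain $A_{\widehat i,\jstar}<A_{\istar,\jstar}<A_{\istar,\widehat j}$. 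If only $\widehat i\neq\istar$ (so $\widehat j=\jstar$), use $A_{\widehat i,\jstar}<A_{\istar,\jstar}=A_{\istar,\widehat j}$; symmetrically if only $\widehat j\neq\jstar$. In each case the strictness enters through exactly the coordinate that differs, which is why at least one strict inequality survives in the chain.

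With the cross inequality in hand, the second step is immediate. Suppose, for contradiction, that both conclusions of the proposition fail, i.e.\ $A_{\widehat i,\widehat j}\ge A_{\istar,\widehat j}$ and $A_{\widehat i,\widehat j}\le A_{\widehat i,\jstar}$. Concatenating these gives $A_{\istar,\widehat j}\le A_{\widehat i,\widehat j}\le A_{\widehat i,\jstar}$, hence $A_{\istar,\widehat j}\le A_{\widehat i,\jstar}$, which directly contradicts the cross inequality $A_{\widehat i,\jstar}<A_{\istar,\widehat j}$. Therefore at least one of $A_{\widehat i,\widehat j}<A_{\istar,\widehat j}$ or $A_{\widehat i,\widehat j}>A_{\widehat i,\jstar}$ must hold, which is exactly the claim.

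The argument is short and purely order-theoretic; the only place requiring care is the first step, where the strict saddle-point assumption must not be misapplied at a coordinate that coincides with the equilibrium (there $A_{\istar,\jstar}<A_{\istar,\jstar}$ would be false). Handling those boundary cases correctly---by observing that a matching coordinate contributes an \emph{equality} while the genuinely differing coordinate still supplies a strict inequality---is the main, and essentially the only, obstacle.
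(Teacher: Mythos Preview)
Your proof is correct and follows essentially the same route as the paper's: assume both conclusions fail, concatenate to obtain $A_{\istar,\widehat j}\le A_{\widehat i,\jstar}$, and contradict the strict saddle-point inequality $A_{\widehat i,\jstar}<A_{\istar,\widehat j}$. You are in fact more careful than the paper, which writes the contradiction as $A_{\widehat i,\jstar} < A_{\istar,\jstar} < A_{\istar,\widehat j}$ without explicitly treating the boundary cases $\widehat i=\istar$ or $\widehat j=\jstar$; your case split on the cross inequality handles these correctly.
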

\begin{proof}
    Let us assume that $A_{\widehat i,\widehat j}\geq A_{\istar,\widehat j}$ and $A_{\widehat i,\widehat j}\leq A_{\widehat i,\jstar}$. This implies that $A_{\istar,\widehat j}\leq A_{\widehat i,\jstar}$ which is a contradiction as $A_{\widehat i,\jstar} < A_{\istar,\jstar} <A_{\istar,\widehat j}$.
\end{proof}

The following lemma establishes the correctness of \textsc{Verify}.
\begin{lemma}\label{verify-correct}
    \textsc{Verify} does the following with probability $1-\delta$:
\begin{itemize}
    \item If $\Delta \leq \Delta_g$ and $(\widehat i,\widehat j)=(\istar,\jstar)$, then $(\widehat i,\widehat j)$ is ``accepted'' as the PSNE of $A$
    \item If $(\widehat i,\widehat j)\neq (\istar,\jstar)$, then $(\widehat i,\widehat j)$ is ``not accepted'' as the PSNE of $A$.
\end{itemize}
\end{lemma}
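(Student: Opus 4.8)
The plan is to split on the two cases in the statement and analyze the two invocations of \texttt{Alg} separately, exploiting the dichotomy in \texttt{Alg}'s guarantee: it terminates and returns the best arm when that arm is \emph{unique}, and (with high probability) fails to terminate when the best arm is \emph{not unique}.

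First I would handle the case $\Delta \leq \Delta_g$ and $(\widehat i,\widehat j)=(\istar,\jstar)$. In $\nu_1$, whose arm means are $A_{i,\jstar}$, the standing assumption $A_{i,\jstar}<A_{\istar,\jstar}$ for $i\neq\istar$ makes $\istar=\widehat i$ the \emph{unique} best arm, with hardness $H_0=\sum_{i\neq\istar}\Delta_{i,\jstar}^{-2}$. The crucial numeric step is to certify that the budget suffices: since $H_0\leq \mathbf{H_1}=(n+m-2)/\Delta_g^2$ and $\Delta\leq\Delta_g$ gives $\mathbf{H_1}\leq(n+m-2)/\Delta^2=H_\star$, we obtain $H_0\leq H_\star$. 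By monotonicity of $x\mapsto x\log(\log(x)/\delta)$, the allotted budget $c_1 H_\star\log(\log(H_\star)/\delta)$ dominates $c_1 H_0\log(\log(H_0)/\delta)$, so \texttt{Alg} terminates and returns $\widehat i$ with probability at least $1-\delta/2$. The symmetric argument applies to $\nu_2$ (means $-A_{\istar,j}$, unique best arm $\jstar$, hardness $\sum_{j\neq\jstar}\Delta_{\istar,j}^{-2}\leq\mathbf{H_1}\leq H_\star$). A union bound over the two $\delta/2$-failure events shows both checks pass and \textsc{Verify} returns ``accepted'' with probability at least $1-\delta$.

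Next I would handle $(\widehat i,\widehat j)\neq(\istar,\jstar)$. By \Cref{prop-verify}, either $A_{\widehat i,\widehat j}<A_{\istar,\widehat j}$ or $A_{\widehat i,\widehat j}>A_{\widehat i,\jstar}$, and I would focus on the check corresponding to whichever condition holds. In the first case $\widehat i$ is strictly suboptimal in $\nu_1$; in the second, $\widehat j$ is strictly suboptimal in $\nu_2$ (its negated mean $-A_{\widehat i,\widehat j}$ lies strictly below $-A_{\widehat i,\jstar}$). The point is that the queried index cannot be returned except on a failure event: if that instance has a unique best arm, it is some other arm, so \texttt{Alg} returns that arm (hence not the queried index) with probability at least $1-\delta/2$; if the best arm is not unique, \texttt{Alg} fails to terminate within the budget with probability at least $1-\delta/2$, and the forced timeout does not return the queried index. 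Either way the relevant check returns ``not accepted'' with probability at least $1-\delta/2\geq 1-\delta$.

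The main obstacle is not a single computation but correctly matching the two modes of \texttt{Alg}'s guarantee to the two cases: the terminate-and-return mode drives acceptance in the first case (once the hardness is bounded by $H_\star$ so the budget is provably sufficient), while the fail-to-terminate-on-a-tie mode is precisely what blocks a false acceptance in the second case when the suboptimal query happens to tie with other arms. I would be careful to treat a budget exhaustion as ``terminates without returning the queried arm,'' so that an exhausted budget always yields ``not accepted'' rather than an accidental acceptance.
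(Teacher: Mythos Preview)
Your proposal is correct and follows essentially the same approach as the paper: bound the per-instance hardness by $H_\star$ via $\mathbf{H_1}\le H_\star$ in the first case, and invoke \Cref{prop-verify} in the second. If anything, you are more careful than the paper in Case~2: the paper simply asserts ``hence, with probability $1-\delta$, $(\widehat i,\widehat j)$ is not accepted,'' whereas you explicitly split on whether the relevant instance has a unique best arm and spell out why a forced budget exhaustion must be read as ``terminated without returning the queried arm,'' which is exactly the reading needed for the argument to go through.
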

\begin{proof}
    Let us first consider the case when $\Delta \leq \Delta_g$ and $(\widehat i,\widehat j)=(\istar,\jstar)$. Recall the bandit instances $\nu_1$ and $\nu_2$. 
    Let $H_{x}=\sum_{i\in[n]\setminus \{\istar\}}\frac{1}{(A_{\istar,\jstar}-A_{i,\jstar})^2}$ and $H_{y}=\sum_{j\in[m]\setminus \{\jstar\}}\frac{1}{(A_{\istar,\jstar}-A_{\istar,j})^2}$. Observe that $H_x+H_y=\frac{n+m-2}{\Delta_g^2}\leq \frac{n+m-2}{\Delta^2}=H_\star$.  Now observe that $\istar$ is the best arm of the instance $\nu_1$ and $\jstar$ is the best arm of the instance $\nu_2$. Hence, \texttt{Alg} does the following with probability $1-\delta$:
    \begin{itemize}
        \item For the instance $\nu_1$, \texttt{Alg} returns $\istar$ as the best arm after taking at most $c_1\cdot H_{x}\cdot \log(\frac{\log H_x}{\delta})$ samples.
        \item For the instance $\nu_2$, \texttt{Alg} returns $\istar$ as the best arm after taking at most $c_1\cdot H_{y}\cdot \log(\frac{\log H_y}{\delta})$ samples.
    \end{itemize}
    Hence, we ``accept'' $(\widehat i,\widehat j)$ with probability at least $1-\delta$.

    Next, we consider the case when $(\widehat i,\widehat j)\neq (\istar,\jstar)$. Due to \Cref{prop-verify}, either $\widehat i$ is not the best arm of the instance $\nu_1$ or $\widehat j$ is not the best of arm of the instance $\nu_2$. Hence, with probability $1-\delta$, $(\widehat i,\widehat j)$ is ``not accepted''.
\end{proof} 

\section{Omitted Calculations} \label{appendix:omit}
Here we state the calculations omitted from various proofs in the main body.

\begin{proof}[Omitted Calculations from the proof of \Cref{midval:correct1}]
  Recall that we generate the set $\calB_i$ by sampling $k$ arms independently and uniformly at random from $\calA$ where $k=\lceil 108\log(\frac{4}{\delta_2})\rceil=474$. Hence, $\frac{k}{3}+1$ is an integer equal to 159 and $\frac{7k}{20}>165$. Also recall that $\mathbb{E}[n_{1,i}]=\mathbb{E}[n_{2,i}]=k/4$. First we have the following due to the Chernoff bound:
  \begin{align*}
      \mathbb{P}\left[n_{1,i}\leq \frac{7k}{40}\right]&= \mathbb{P}\left[n_{1,i}\leq \left(1-\frac{3}{10}\right)\cdot\frac{k}{4}\right]\\
      &=\mathbb{P}\left[n_{1,i}\leq \left(1-\frac{3}{10}\right)\cdot\mathbb{E}[n_{1,i}]\right]\\
      &\leq \exp\left(-\frac{\mathbb{E}[n_{1,i}]\cdot (0.3)^2 }{2}\right)\\
      &= \exp\left(-\frac{k\cdot 0.09}{4\cdot 2}\right)\\
      &\leq \exp\left(-\frac{108\log(\frac{4}{\delta_2})\cdot 0.09}{4\cdot 2}\right)\\
      &=  \exp\left(-1.215\cdot\log\left(\frac{4}{\delta_2}\right)\right)\\
      &<\exp\left(-\log\left(\frac{4}{\delta_2}\right)\right)=\frac{\delta_2}{4}
  \end{align*}
  Next we have the following due to the following due to the Chernoff bound
  \begin{align*}
      \mathbb{P}\left[n_{1,i}\geq \frac{k}{3}\right]&= \mathbb{P}\left[n_{1,i}\geq \left(1+\frac{1}{3}\right)\cdot\frac{k}{4}\right]\\
      &=\mathbb{P}\left[n_{1,i}\geq \left(1+\frac{1}{3}\right)\cdot\mathbb{E}[n_{1,i}]\right]\\
      &\leq \exp\left(-\frac{\mathbb{E}[n_{1,i}]\cdot (1/3)^2 }{3}\right)\\
      &= \exp\left(-\frac{k}{4\cdot 3\cdot 9}\right)\\
      &\leq \exp\left(-\frac{108\log(\frac{4}{\delta_2})}{108}\right)\\
      &=\exp\left(-\log\left(\frac{4}{\delta_2}\right)\right)=\frac{\delta_2}{4}
  \end{align*}
  Similarly, we can show using Chernoff bound that $\frac{7k}{40}\leq |n_{2,i}|\leq \frac{k}{3}$ occurs with probability at least $1-\frac{\delta_2}{2}$.

  Recall the definition of $Z$. Also recall that $\ell=\lceil 14\log(\frac{1}{\delta}) \rceil$ and $\mathbb{E}[Z]\geq\frac{9\ell}{10}$. Now we have the following due to the Chernoff bound:
   \begin{align*}
      \mathbb{P}\left[Z\leq 0.54\ell\right]&= \mathbb{P}\left[n_{1,i}\leq \left(1-\frac{2}{5}\right)\cdot\frac{9\ell}{10}\right]\\
      &\leq\mathbb{P}\left[n_{1,i}\leq \left(1-\frac{2}{5}\right)\cdot\mathbb{E}[Z]\right]\\
      &\leq \exp\left(-\frac{\mathbb{E}[Z]\cdot (0.4)^2 }{2}\right)\\
      &\leq \exp\left(-\frac{9\ell\cdot 0.16}{10\cdot 2}\right)\\
      &\leq \exp\left(-\frac{9\cdot 14\log(\frac{1}{\delta})\cdot 0.16}{10\cdot 2}\right)\\
      &=  \exp\left(-1.008\cdot\log\left(\frac{1}{\delta}\right)\right)\\
      &<\exp\left(-\log\left(\frac{1}{\delta}\right)\right)=\delta
  \end{align*}
\end{proof}

\begin{proof}[Omitted calculations in the proof of \Cref{theorem-main}]
    Using \Cref{lem:psne-sample} we get that at any iteration $t$, \infalg/ uses at most $c_1\cdot \frac{n+m-2}{\Delta_t^2}\cdot\log(\frac{nm}{\delta_t})\cdot \log(nm)$ samples where $c_1$ is an absolute constant. Using \Cref{verify-guarantees} we get that at any iteration $t$, \textsc{Verify} uses at most $c_2\cdot \frac{n+m-2}{\Delta_t^2}\cdot \log\left(\frac{\log\left(\frac{n+m}{\Delta_t^2}\right)}{\delta_t}\right)$ where $c_2$ is absolute constant. As $n,m\geq 1$, we have $n\geq 1\geq \frac{m}{2m-1}$ which implies $n+m\leq 2nm$. Now we have the following:
    \begin{align*}
        \log\left(\frac{\log\left(\frac{n+m}{\Delta_t^2}\right)}{\delta_t}\right)&\leq\log\left(\frac{\log\left(\frac{2nm}{\Delta_t^2}\right)}{\delta_t}\right)\\ 
        &\leq \log\left(\frac{\log(2nm)+2\log\left(\frac{1}{\Delta_t}\right)}{\delta_t}\right)\\ 
        &\leq \log\left(\frac{2nm+2nm\log\left(\frac{1}{\Delta_t}\right)}{\delta_t}\right)\tag{as $\log(x)<x$}\\ 
        &=\log\left(\frac{2nm\log\left(\frac{e}{\Delta_t}\right)}{\delta_t}\right)\\ 
    \end{align*}
    Now the total number of samples $\tau$ taken at iteration $t$ is upper bounded as follows:
    \begin{align*}
        \tau&=c_1\cdot \frac{n+m-2}{\Delta_t^2}\cdot\log\left(\frac{nm}{\delta_t}\right)\cdot \log(nm)+c_2\cdot \frac{n+m-2}{\Delta_t^2}\cdot \log\left(\frac{\log\left(\frac{n+m}{\Delta_t^2}\right)}{\delta_t}\right)\\
        &\leq c_1\cdot \frac{n+m-2}{\Delta_t^2}\cdot\log\left(\frac{nm}{\delta_t}\right)\cdot \log(nm)+c_2\cdot \frac{n+m-2}{\Delta_t^2}\cdot \log\left(\frac{2nm\log\left(\frac{e}{\Delta_t}\right)}{\delta_t}\right)\\ 
        &\leq c\cdot \frac{n+m-2}{\Delta_t^2}\cdot\log\left(\frac{nm\log\left(\frac{1}{\Delta_t}\right)}{\delta_t}\right)\cdot \log(nm)
    \end{align*}
    where $c$ is an absolute constant.
\end{proof}

\section{Additional Plots}\label{appendix:plot}
In this section, we perform few additional experiments. We consider the hard instance $A_{\texttt{hard}}(\Delta_{\min},\beta)$ described in \Cref{sec:experiments} with $d=128,256,512$. As shown in \Cref{sec:experiments} both \infalg/ and \tsinf/ perform far better than \expix/, \lucbg/ and uniform random sampling. Hence, we compare the performance of \infalg/ and \tsinf/ here by measuring the probability of identifying the PSNE. First in \Cref{fig:exp_2}, we fix $\beta=0.1$ and vary $\Delta_{\min}$ and compare the performance of \infalg/ and \tsinf/. We observe that \infalg/ has a better performance than \tsinf/. Next in \Cref{fig:exp_5}, we fix $\Delta_{\min}=0.01$ and vary $\beta$ and compare the performance of \infalg/ and \tsinf/. We observe that \infalg/ has a better performance than \tsinf/. Finally in \Cref{fig:exp_6}, we vary both $\beta$ and $\Delta_{\min}$ and compare the performance of \infalg/ and \tsinf/. We observe that \infalg/ has a better performance than \tsinf/. 
\begin{figure*}[h!]
  \includegraphics[width=\textwidth]{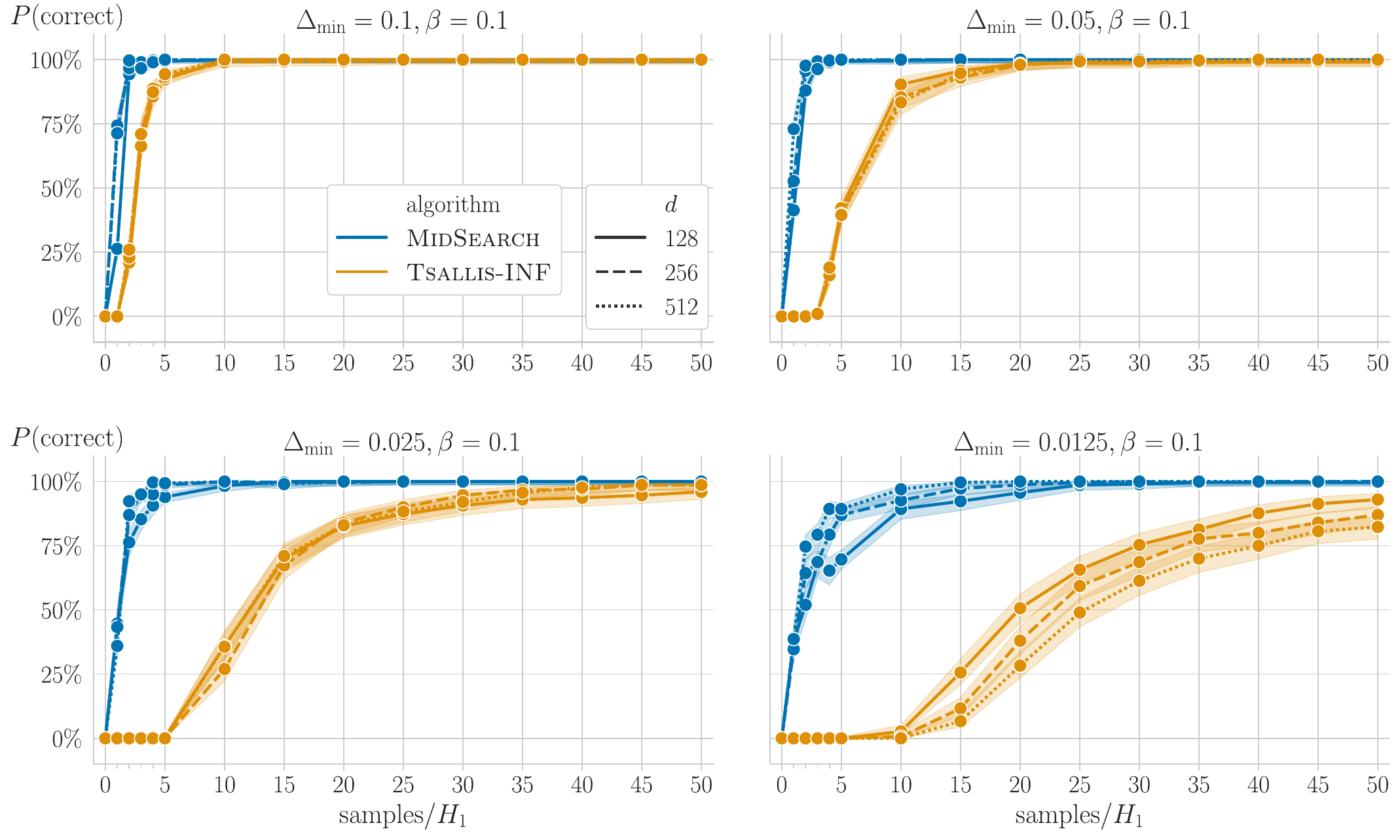}
  \caption{Plot of \infalg/ vs. \tsinf/ for varying $\Delta_{\min}$ parameter. Each algorithm was run for $N=300$ trials on the $A_{\texttt{hard}}(\Delta_{\min},0.1)$ instance with a budget of $50H_1$ samples.}
  \label{fig:exp_2}
\end{figure*}

\begin{figure*}[h!]
  \includegraphics[width=\textwidth]{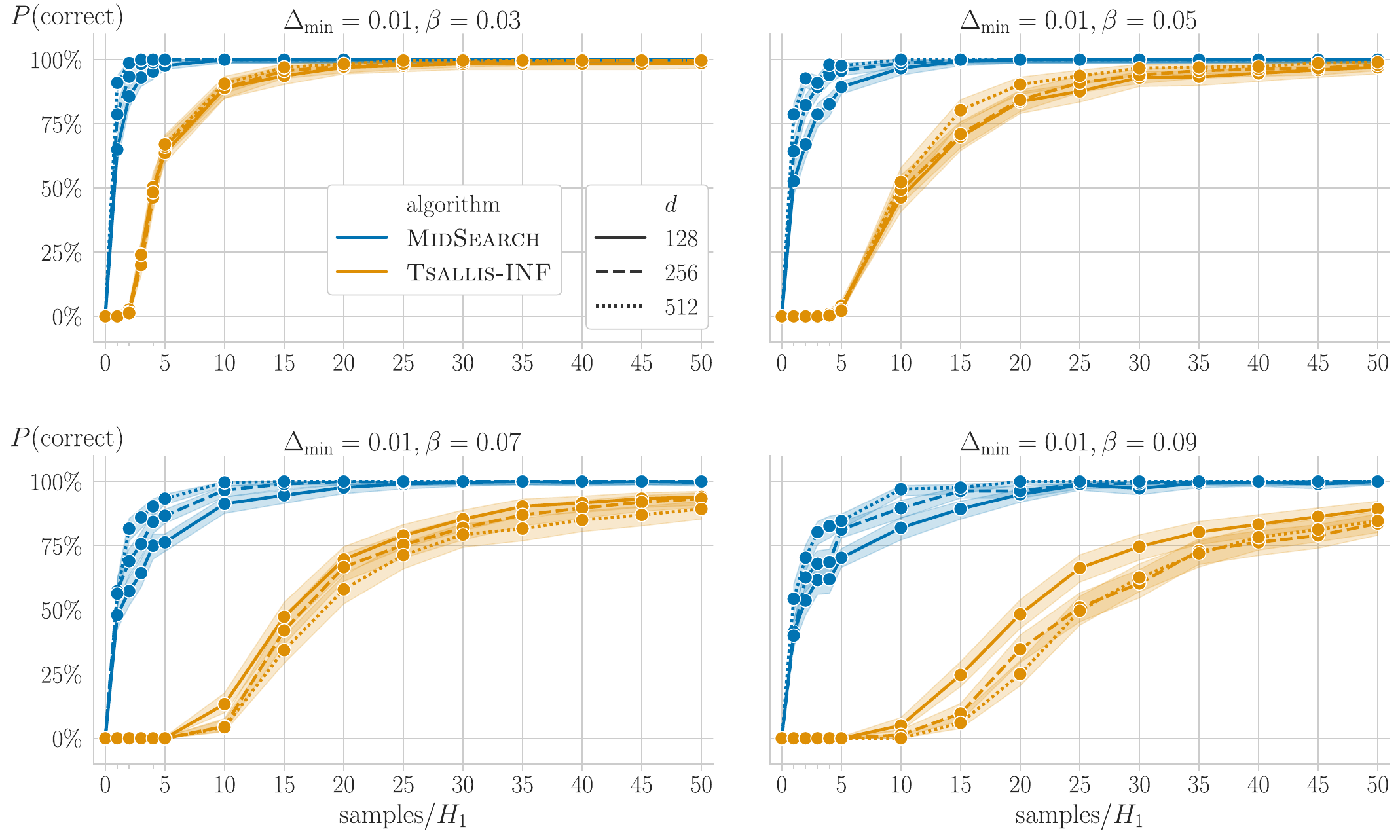}
  \caption{Plot of \infalg/ vs. \tsinf/ for varying $\beta$ parameter. Each algorithm was run for $N=300$ trials on the $A_{\texttt{hard}}(0.01,\beta)$ instance with a budget of $50H_1$ samples.}
  \label{fig:exp_5}
\end{figure*}

\begin{figure*}[h!]
  \includegraphics[width=\textwidth]{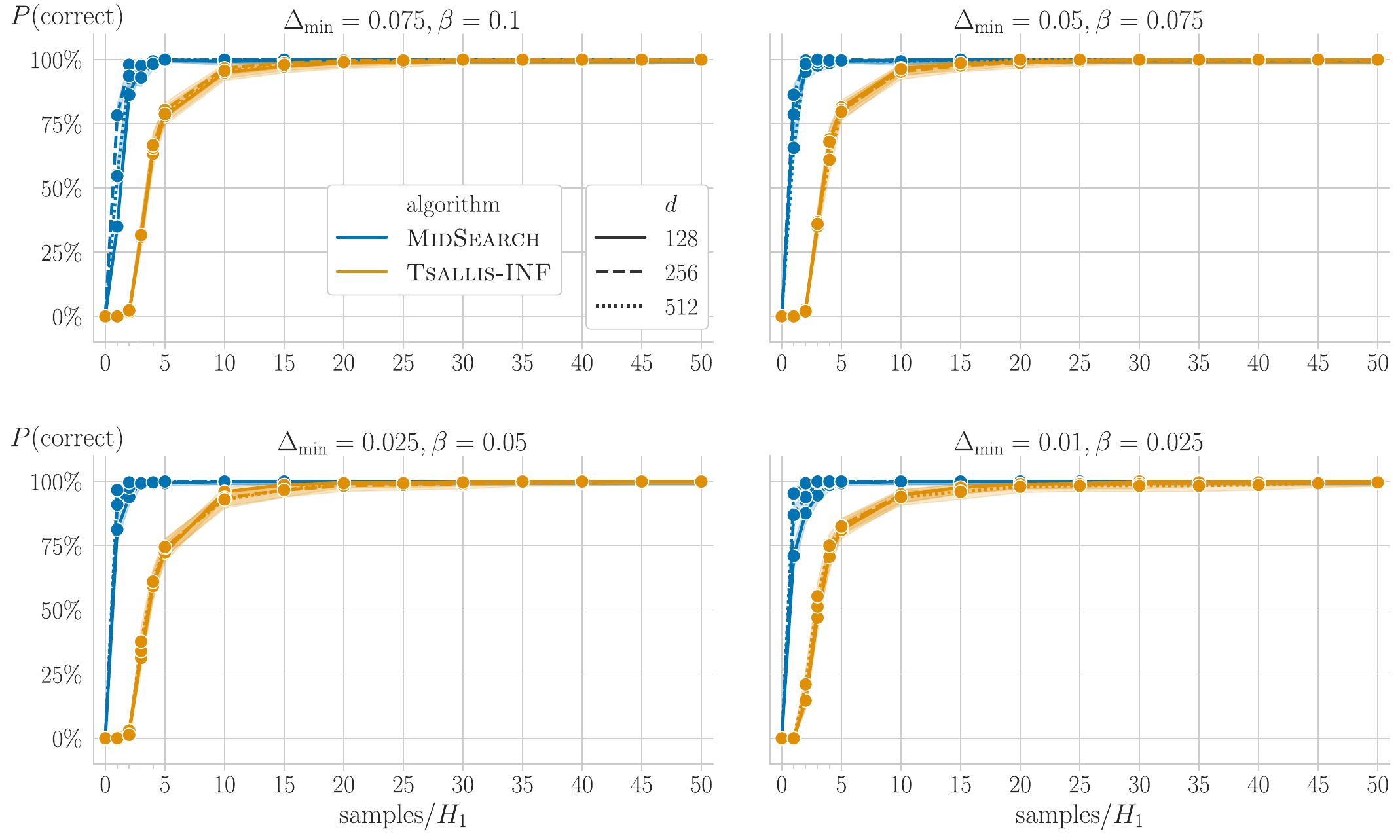}
  \caption{Plot of \infalg/ vs. \tsinf/ for varying both $\Delta_{\min}$ and $\beta$ parameters. Each algorithm was run for $N=300$ trials on the $A_{\texttt{hard}}(\Delta_{\min},\beta)$ instance with a budget of $50H_1$ samples.}
  \label{fig:exp_6}
\end{figure*}


\end{document}